\documentclass{article}
\PassOptionsToPackage{numbers, compress}{natbib}
 \usepackage[preprint]{neurips_2026}

\usepackage[utf8]{inputenc}
\usepackage[T1]{fontenc}
\usepackage{url}
\usepackage{microtype}
\usepackage{graphicx}
\usepackage{subcaption}
\usepackage{booktabs}
\usepackage{nicefrac}
\usepackage{xcolor}
\usepackage{hyperref}
\usepackage{algorithm}
\usepackage{algorithmic}

\usepackage{Akbar}

\usepackage{amsmath}
\usepackage{amssymb}
\usepackage{mathtools}
\usepackage{amsthm}

\usepackage[capitalize,noabbrev]{cleveref}

\theoremstyle{plain}
\newtheorem{theorem}{Theorem}[section]

\newtheorem{corollary}[theorem]{Corollary}
\theoremstyle{definition}
\newtheorem{definition}[theorem]{Definition}

\theoremstyle{remark}
\newtheorem{remark}[theorem]{Remark}

\usepackage[textsize=tiny]{todonotes}

\usepackage{bm}
\usepackage{enumitem}
\usepackage{multirow}
\usepackage{tabularx}

\usepackage{placeins}
\usepackage{float}

\newcommand{\R}{\mathbb{R}}

\newcommand{\one}{\mathbf{1}}

\newcommand{\argmin}{\operatorname*{arg\,min}}
\newcommand{\argmax}{\operatorname*{arg\,max}}
\newcommand{\conv}{\operatorname{conv}}

\newcommand{\ext}{\operatorname{ext}}

\newcommand{\diam}{\mathrm{diam}}

\definecolor{Red}{RGB}{244, 124, 124}
\definecolor{Green}{RGB}{162, 222, 147}
\definecolor{Blue}{RGB}{112, 161, 211}
\newcommand{\rkone}[1]{{\setlength{\fboxsep}{1.5pt}\colorbox{Red!50}{#1}}}
\newcommand{\rktwo}[1]{{\setlength{\fboxsep}{1.5pt}\colorbox{Blue!50}{#1}}}

\title{Self-Supervised Combinatorial Optimization with Constraints via Frank--Wolfe}

\author{Akbar Rafiey\thanks{Equal contribution.} \\
  NYU\\
  \texttt{ar9530@nyu.edu} \\
  \And
  Yifei Xu\footnotemark[1]\\
  NYU \\
  \texttt{yx3590@nyu.edu} \\
  \And
  Nikolaos Karalias \\
  MIT\\
  \texttt{stalence@mit.edu} }

\begin{document}

\maketitle

\begin{abstract}
Self-supervised learning for combinatorial optimization has emerged as a promising paradigm for solving discrete optimization problems with neural networks, but a central challenge remains: handling hard combinatorial constraints within continuous, gradient-based training. Continuously extending combinatorial objectives to convex domains is a powerful technique, yet existing approaches often require projection steps that constrain neural network outputs to lie inside the feasible polytope and rely on ad-hoc and problem-specific constructions. We propose a general framework in which the neural network is allowed to predict arbitrary continuous vectors that could potentially lie outside of the feasible polytope. These predictions are then approximated by sparse convex combinations of feasible solutions using a geometric decomposition algorithm based on Frank–Wolfe methods and approximate Carath\'eodory results. This decomposition induces an a.e.-differentiable, self-supervised loss defined as the expected value of the discrete objective. The same procedure provides an automatic rounding guarantee at inference time. We demonstrate strong empirical performance across multiple combinatorial problems, including the Quadratic Assignment Problem, Maximum Coverage, and the Traveling Salesperson Problem.
\end{abstract}

\section{Introduction}
\label{sec:intro}

Combinatorial optimization (CO) forms a central pillar of optimization theory and practice, encompassing a broad class of problems in which one seeks to optimize an objective over a discrete, often exponentially large, feasible set. Such problems arise ubiquitously across science and engineering, from routing and resource allocation to learning and inference. Canonical examples include the Traveling Salesperson Problem (TSP), Maximum Coverage (MC), and Quadratic Assignment Problem (QAP), among many others. What unifies these problems is not only their computational hardness, but also the rich algorithmic structure induced by discrete constraints that describe large-scale discrete combinatorial spaces of configurations.

Despite this shared structure, successful approaches to CO have historically been case-specific, often reflecting deep problem-dependent insights; "stroke-of-genius" heuristics. A common algorithmic template, especially in approximation algorithms, relaxes the discrete problem into a continuous or convex surrogate, solves for a fractional solution, and then applies a bespoke rounding or improvement procedure to recover feasibility. While this relax-and-round paradigm has led to remarkable practical results and theoretical guarantees, adapting this approach to  gradient-based data-driven settings can be challenging. This limitation and challenge is especially pronounced when problem instances are drawn from structured distributions, increasingly common in real world applications. Self-supervised learning (SSL) approaches for combinatorial optimization offer an appealing alternative: unlike static approximation algorithms and heuristics, they can leverage latent patterns in data, train efficiently without requiring large amounts of labeled solutions, and incorporate algorithmic priors that facilitate learning and generalization.

In this work, we bridge the gap between discrete optimization and deep learning by building on work that used continuous extensions of discrete functions as losses for neural CO \cite{Karalias2022NeuralSF,karalias2025geometric,nerem2025differentiable}. The central idea in this line of work is to use extensions to embed discrete constraints and objectives directly into a learning pipeline, allowing learning and optimization to proceed end-to-end. By smoothing the discrete landscape, these approaches enable an unsupervised learning paradigm where models learn to exploit  patterns and discover high-quality solutions without requiring expensive labels.

More specifically, we propose a generic learning-based pipeline that integrates hard combinatorial constraints directly into training and inference. The neural network is allowed to output an arbitrary continuous vector, without any feasibility requirement. This output is then passed through a geometric decomposition algorithm inspired by approximate Carath\'eodory and Frank–Wolfe methods \cite{barman2015approximating,mirrokni_tight_nodate,combettes_revisiting_2023}, which produces a sparse convex combination of feasible solutions. The resulting decomposition may differ substantially from the network output but is always expressed entirely in terms of feasible points. This effectively shifts the burden of constraint enforcement from tuning penalty terms to capturing constraints algorithmically through our decomposition algorithm. The decomposition induces a (a.e. differentiable) distribution over feasible solutions, and the expected value of the discrete objective under this distribution is used directly as a self-supervised training loss. In addition, the discrepancy between the network output and its decomposition can be quantified and included as an auxiliary regularization term, with an associated weight. While not required by the framework, we find empirically that including this term improves optimization stability and solution quality. Gradients propagate through this objective using automatic differentiation. At inference time, the same procedure yields a small set of feasible candidate solutions from which the best one is selected. Figure \ref{fig:pipeline} summarizes the pipeline.

Our pipeline is projection-free. The decomposition in \cite{karalias2025geometric} requires the neural prediction to lie inside the feasible polytope, with feasibility ensured separately through problem-specific constructions. For example, it uses distinct projections for the hypersimplex (Proposition 4.4) and the spanning-tree polytope (Theorem E.7). Our method removes the need for projection and accepts arbitrary ambient-space outputs and uses a Frank--Wolfe decomposition to construct a sparse convex combination of feasible vertices. 

Moreover, our decomposition has an explicit run time certificate and is more efficient in terms of the number of Linear Maximization Oracle (LMO) calls. It requires exactly one exact or approximate LMO call per iteration, so $T$ iterations use exactly $T$ LMO calls. In contrast, each iteration of the generic GLS (Groetschel/Lovasz/Schrijver) decomposition in \cite{karalias2025geometric} invokes minimal-face and ray-boundary optimization routines that themselves require solving convex optimization problems using the oracle; see the proof of Theorem 4.1. Consequently, a single GLS iteration may require many LMO calls. For example, a standard ellipsoid-based implementation requires approximately $O(n^2L)$ oracle calls per iteration and $O(n^3L)$ calls overall, where $L$ is a precision parameter.

Although specialized GLS decompositions can be more efficient for particular polytopes, deriving them requires nontrivial, problem-specific techniques. Given an efficient exact or approximate LMO, our procedure applies unchanged across different polytopes, with only the LMO changing. It thus provides a unified oracle-based method for hypersimplex, Birkhoff, spanning-tree, and matching polytopes, subsuming the specialized settings considered in \cite{karalias2025geometric,nerem2025differentiable}.

\begin{figure*}[t]
     \centering    
\includegraphics[width=0.95\textwidth]{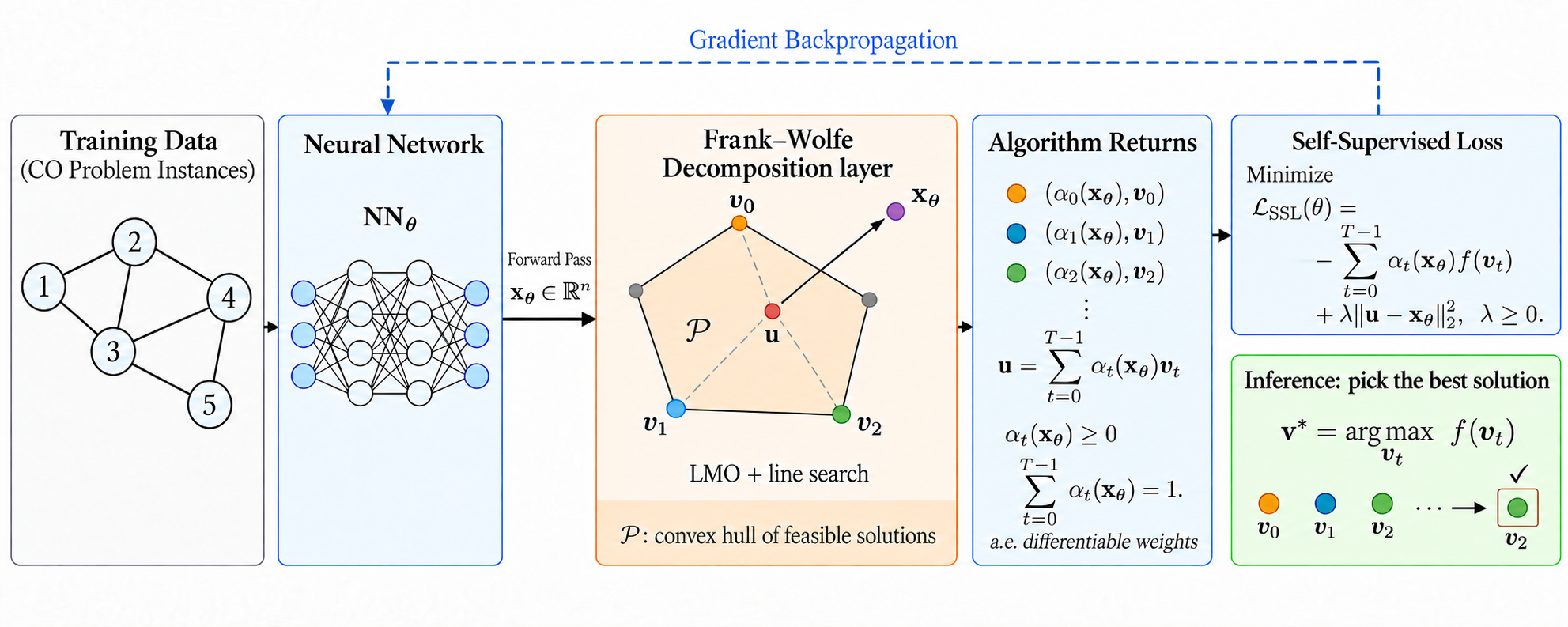}
     \caption{Overview of our framework. During training, the model with parameters $\theta$ outputs a point $\bx_\theta$, which may lie outside $\mathcal{P}$, the convex hull of feasible solutions. Our Frank--Wolfe decomposition layer constructs a feasible proxy $\bu$ as a convex combination of at most $T$ feasible vertices $\bv_t$, with weights $\alpha_t(\bx_\theta)$. The self-supervised loss $\mathcal{L}_{\mathrm{SSL}}$ is computed from the expected discrete objective $f$ and a reconstruction term weighted by $\lambda\geq 0$. The weights $\alpha_t(\bx_\theta)$ are a.e. differentiable w.r.t $\bx_\theta$, allowing gradients from the loss to propagate through the decomposition layer back to the neural network. During inference (green box), the same procedure generates feasible candidates, and the best solution $\bv^*$ is selected. Solid arrows indicate the forward pass, and the dashed arrow indicates backpropagation.}
    \label{fig:pipeline}
\end{figure*}

\paragraph{Contributions.} We make the following contributions:

\begin{itemize}[leftmargin=*]
\item \textbf{A projection-free, self-supervised framework for CO.}
We introduce a unified learning framework that enables neural networks to optimize combinatorial objectives under hard discrete constraints by operating over convex polytopes of feasible solutions. Unlike prior geometric-extension methods, our framework does \emph{not} require network outputs to lie in the polytope during training and provides a common formulation across a broad class of combinatorial structures.

\item \textbf{An oracle-based extension layer using Frank--Wolfe decomposition.}
To realize this framework, we propose a Frank--Wolfe–style extension layer that maps an arbitrary continuous prediction to a sparse convex combination of feasible solutions, yielding a distribution over feasible discrete solutions. Given an efficient exact or approximate linear maximization oracle (LMO), the same procedure applies across different polytopes; only the implementation of the LMO changes. It requires one LMO call per decomposition iteration and avoids problem-specific projections and specialized decomposition algorithms.


\item \textbf{A differentiable, self-supervised loss with built-in rounding guarantees.}
The induced convex decomposition allows us to define a differentiable loss as the expected discrete objective, with approximation error explicitly incorporated. At inference time, selecting the best solution in the support yields a feasible solution whose objective value is no worse than the expectation. This provides a direct rounding guarantee and unifies learning and solution recovery within a single pipeline.


\item \textbf{Empirical validation across diverse combinatorial domains.} We demonstrate the effectiveness of our framework on a range of classical CO problems—including MC, TSP, and QAP—showing strong empirical performance and broad applicability across diverse constraint families.
\end{itemize}

\section{Related Work}
\label{sec:related}

\textbf{Extensions and optimization.} A long line of work in CO views discrete problems through the lens of continuous optimization by embedding feasible solutions into a convex polytope and optimizing over this relaxation. This perspective is deeply rooted in algorithm design and polyhedral geometry, where constructing convex or concave extensions with favorable optimization properties has been a central theme \cite{lovasz1983submodular,crama1993concave,murota1998discrete,tawarmalani2002convex}. In particular, the convex closure provides the tightest convex extension of a discrete set function and underlies many relaxation-based methods \cite{falk1976successive,tawarmalani2002convex}. Classical solvers and approximation algorithms routinely exploit this framework: prominent examples include LP and SDP relaxations for problems such as Max Cut and TSP, with solvers like Concorde and Gurobi relying on large-scale LP and cutting-plane techniques \cite{goemans1995improved,williamson2011design,dantzig1954solution,applegate2011traveling}. While these approaches yield strong theoretical guarantees, they typically rely on problem-specific, rounding procedures that are non-differentiable and difficult to integrate into modern, data-driven learning pipelines.

\textbf{Neural CO.}
Recent work has explored unsupervised and self-supervised learning approaches for CO that optimize continuous surrogates of discrete objectives, typically induced through probabilistic relaxations and expected-value formulations \cite{amizadeh2018learning,karalias2020erdos,toenshoff2021graph,wang2022unsupervised}. Compared to reinforcement learning or supervised methods \cite{khalil2017learning,kool2019attention,vinyals2015pointer}, these approaches avoid labeled data and often exhibit more stable training.
Unsupervised CO pipelines involve several crucial components, including neural network architecture \citep{min2022can,vinyals2015pointer,sato2019approximation} and the role of input features for well-known classes of models \citep{lim2023sign,keriven2023functions}. In this work, our focus is on loss function design and rounding. A common strategy in self-supervised CO is to parameterize a distribution over discrete solutions and use the expected value of the discrete objective as a differentiable training loss \cite{karalias2020erdos,wang2022unsupervised,bu2024tackling}, enabling smoother optimization but raising challenges related to constraint enforcement.
A major consideration in these approaches is enforcing constraints on the outputs of neural networks. Techniques in the literature typically involve projecting the network output onto the feasible set. This has been explored for cardinality constraints using ideas from optimal transport \citep{wang2023linsatnet}, and for general linear constraints using gradient-based methods \citep{donti2021dc3,zeng2024glinsat}. This also includes Sinkhorn-style projection methods and their extensions \citep{sinkhorn1964relationship,wang2022towards}. In the continuous (and potentially non-convex) optimization setting, several projection techniques have also been developed \citep{liang2023low,liang2024homeomorphic,liangefficient,li2025gauge}. Other approaches add penalty terms or improve training dynamics through annealing or physics-inspired formulations \cite{sun2022annealed,schuetz2022combinatorial,mundinger2025neural}.
Our method differs from these approaches by incorporating feasibility directly through geometric decomposition. We define a distribution supported entirely on feasible solutions, allowing constraints to be built into the distribution itself. Thus, the training loss focuses solely on the discrete objective while remaining a.e. differentiable. A closely related line of work develops neural CO methods based on geometric relaxations of feasibility polytopes. \citet{Karalias2022NeuralSF,karalias2025geometric} propose differentiable extensions for set-function optimization under various constraints, while \citet{nerem2025differentiable} study CO problems over permutations. These approaches typically require the neural network output to lie within the feasibility polytope. In contrast, our approach allows arbitrary continuous network outputs and applies a general-purpose geometric decomposition across a broad class of combinatorial constraints.

\textbf{Approximate Carath\'eodory and Frank-Wolfe.}
Our approach builds on results from convex geometry and projection-free optimization. A central component is the Frank--Wolfe algorithm \cite{frank1956algorithm, canon_tight_1968}, a first-order method for constrained optimization. Several variants with improved convergence have been studied \cite{jaggi_revisiting_nodate, lacoste2015global}, and Frank--Wolfe has been widely applied to structured and combinatorial polytopes \cite{garber2016linear, lacoste2013block, krishnan2015barrier}. Approximate variants of Frank--Wolfe with provable guarantees have also been proposed \cite{locatello2017unified,zhou_approximate_2022}.
From a geometric perspective, the Carath\'eodory theorem \cite{caratheodory1907} states that any point in a polytope $P \subseteq \mathbb{R}^d$ can be expressed as a convex combination of at most $d+1$ corners of the polytope. Approximate versions provide dimension-independent bounds on sparse approximations \cite{barman2015approximating, mirrokni_tight_nodate}. Obtaining such sparse convex combinations of polytope vertices is essential for our framework, and Frank--Wolfe-type algorithms provide a natural algorithmic mechanism to achieve this \cite{mirrokni_tight_nodate, combettes_revisiting_2023}.



\section{Problem Formulation and Learning Setup}

We consider CO problems defined over a discrete feasible set. Each problem instance is specified by a collection of variables $V = \{v_1, \dots, v_n\}$, each assigned a value from a discrete domain $D=\{1,\dots,d\}$, a finite set of feasible solutions $\mathcal{C}\subseteq D^n$, and a real-valued objective function $f : \mathcal{C} \to \mathbb{R}$. The goal is to find an optimal feasible solution $\bx^* \in \mathcal{C}$ that optimizes (either minimizes or maximizes) the 
objective function:
\begin{align}
\label{eq:main-CO}
\max f(\bx), \quad \text{s.t.} \quad \bx\in \mc{C}.
\end{align}
This abstract formulation accommodates a broad class of CO problems, including those defined over binary vectors, general discrete assignments, and permutations.

Many examples in this paper focus on problems with Boolean decision variables, where solutions can be represented by indicator vectors encoding subset or structure selection under constraints. Two representative running examples are Maximum Coverage, which selects a subset of elements subject to a budget constraint, and the TSP, where binary variables indicate whether edges are included in a Hamiltonian tour. Our framework also applies to CO problems over permutations, e.g., the QAP, and to non-Boolean discrete domains, e.g., $k$-submodular maximization.

A standard geometric perspective on CO is to consider the convex hull of all feasible solutions,
\[
\mathcal{P} = \mathrm{conv}(\mathcal{C}).
\]
For CO problems with Boolean domain, we have $\mathcal{P} = \mathrm{conv}(\mathcal{C})\subseteq [0,1]^n$. As an example, when $\mathcal{C}$ consists of all subsets of size $k$, $\mathcal{P}$ is the associated cardinality polytope, a.k.a hypersimplex. Each $S \in \mathcal{C}$ is represented by its indicator vector $\one_S \in \{0,1\}^n$, where $(\one_S)_i = 1$ if element $i$ belongs to $S$ and $(\one_S)_i = 0$ otherwise. The polytope associated with $\mathcal{C}$ is then $
  \mathcal{P} := \conv\{\one_S : S \in \mathcal{C}\} \subseteq [0,1]^n,$
whose vertices, i.e., extreme points $\ext(\mathcal{P})$, coincide with the indicator vectors of subsets of size $k$.

Note that although $\mathcal{P}$ may have an exponentially large facet description, many important combinatorial polytopes admit efficient linear optimization via polynomial-time separation or optimization oracles.



\subsection{Self-Supervised Learning for CO}

In the self-supervised learning setting, each problem instance $\mathcal{I}$ is described by input features $\mathbf{Z}_\mc{I}$ possibly together with instance-specific structure. These features are processed by a neural network $\mathrm{NN}_\theta$, which outputs a representation $\bx_\theta$. Importantly, this output is not required to lie in the convex polytope of feasible solutions. 

Rather than training the network to imitate optimal solutions, learning is guided by a loss function defined on an a.e. differentiable extension of the combinatorial objective. This extension is constructed using the geometry of the polytope $\mathcal{P}$, allowing the objective to be meaningfully evaluated on fractional solutions and optimized using gradient-based methods. 

The network is trained over a collection of instances without access to ground-truth optimal solutions, making the design of the loss function central. At inference time, the continuous output $\bx$ is converted into a feasible discrete solution using a principled, geometry-based procedure, rather than heuristic rounding. Moreover, because no labeled data is required, the same objective can be optimized directly on test instances, allowing additional computation at test time to further improve solution quality.

\section{Proposed Method}

\label{sec:background}

We now describe the overall pipeline. Consider a polytope $\mc{P}=\mathrm{conv}(\mathcal{C})$, where $\mc{C}\subset \zR^n$ denotes the set of feasible solutions for a CO problem. Let $\ext(\mathcal{P})\subseteq\mc{C}$ denote the set of extreme points i.e., corners, of $\mc{P}$ which correspond to feasible combinatorial solutions.

Given a target point $\bx \in \mathbb{R}^n$, which can be thought of as the output of a neural network, our goal is to find a point $\bu \in \mathcal{P}$ that serves as a good proxy for $\bx$ and satisfies desirable properties. In particular, we want $\bu$ to be a \emph{sparse} convex combination of $T$ points $\mathbf{v}_0, \dots, \mathbf{v}_{T-1} \in \ext(\mathcal{P})$. That is, for a small $T$, we have
\begin{align}
\label{eq:property}
    \bu(\bx) = \sum_{t=0}^{T-1} \alpha_t(\bx) \, \bv_t \in \mathcal{P} 
    \quad \text{with} \quad
    \begin{cases}
    \bv_t \in \ext(\mathcal{P}), \\
    \sum_{t=0}^{T-1} \alpha_t(\bx) = 1;~ 
    \alpha_t \ge 0.
    \end{cases}
\end{align}

The coefficients $\alpha_t(\bx)$ naturally define a distribution $\mathcal{D}(\bx)$ over feasible solutions. We define the FW-induced objective
\begin{align}
    F(\bx) = \sum_{t=0}^{T-1} \alpha_t(\bx) \, f(\bv_t)
           = \mathbb{E}_{\bv \sim \mathcal{D}(\bx)}[f(\bv)].
\end{align}


\noindent\textbf{Training.} In the learning pipeline, let $\bx_\theta$ denote the prediction of
a neural network with parameters $\theta$. For a maximization problem \eqref{eq:main-CO},
we train the network by minimizing the self-supervised loss
\begin{align}
    \label{eq:loss}
    \mathcal{L}_{\mathrm{SSL}}(\theta):=-F(\bx_\theta)+\lambda\|\bx_\theta-\bu(\bx_\theta)\|_2^2 \, , \qquad \lambda \geq 0.
\end{align} 

In order to optimize the loss function using gradient-based methods, the weights $\alpha_t(\bx_\theta)$ must be differentiable functions of $\bx_\theta$. Under this condition, derivatives of the loss with respect to the network parameters can be computed via the chain rule:
$
\frac{\partial F}{\partial \theta}
= \frac{\partial F}{\partial \boldsymbol{\alpha}(\bx_\theta)}
  \cdot \frac{\partial \boldsymbol{\alpha}(\bx_\theta)}{\partial \bx_\theta}
  \cdot \frac{\partial \bx_\theta}{\partial \theta},
$ where $\frac{\partial F}{\partial {\alpha}_t(\bx_\theta)}=f(\bv_t)$ and $\frac{\partial \boldsymbol{\alpha}(\bx_\theta)}{\partial \bx_\theta}$ comes from differentiating the steps our FW decomposition algorithm, Algorithm~\ref{alg:fw}. Note that under the stable-oracle decision assumption introduced in Section~\ref{sec:algfw}, the selected vertices $\mathbf{v}_t$ are locally constant functions of $\mathbf{x}_\theta$ almost everywhere. Hence, $\frac{\partial \mathbf{v}_t}{\partial \mathbf{x}_\theta}=0$ wherever the derivative exists, and the gradient propagates only through the coefficients. The gradient of the term $\lambda\|\bx_\theta-\bu(\bx_\theta)\|_2^2$ is computed similarly by differentiating through $\mathbf u(\bx_\theta)=\sum_{t=0}^{T-1}\alpha_t(\bx_\theta)\mathbf v_t$.

In Section \ref{sec:algfw}, we present our algorithm, which given $\bx$ computes $\bu(\bx) \in \mathcal{P}$ satisfying the properties in \eqref{eq:property}. Moreover, we show that the weights $\alpha_t(\bx)$ returned by our algorithm are a.e. differentiable w.r.t. $\bx$ and provide a convergence rate in $T$, the number of iterations of the algorithm. For notational simplicity, throughout Section \ref{sec:algfw} we suppress the dependence on the target point $\bx$, writing $\bu$ and $\alpha_t$ in place of $\bu(\bx)$ and $\alpha_t(\bx)$.

\begin{remark}[Extension property]
The FW-induced objective $F$ defines a surrogate on $\mathbb{R}^n$. For $F$ to be an extension of $f:\mathcal C\to\mathbb R$, it must agree with $f$ on $\mathcal C$. We therefore adopt the convention that Algorithm~\ref{alg:fw} returns the singleton decomposition ${(1,\bx)}$ whenever $\bx\in\mathcal C$. Consequently, $F(\mathbf c)=f(\mathbf c)$ for every $c\in\mathcal C$. Since $\mathcal C$ is finite, this convention modifies $F$ only on a set of Lebesgue measure zero and therefore does not affect its almost-everywhere differentiability.
\end{remark}

\subsection{From NN Output to Polytope: Frank-Wolfe Decomposition}
\label{sec:algfw}

In this section, we describe our algorithm presented in \cref{alg:fw}. Given $\bx$, define the following quadratic objective:
\begin{equation}
\label{eq:J}
  J(\mathbf{u}) = \tfrac{1}{2} \|\mathbf{x} - \mathbf{u}\|_2^2, \qquad \mathbf{u} \in \mathcal{P}.
\end{equation}
Starting at an initial point $\bu_0$ in the polytope our goal is to iteratively move towards $\bu_1,\bu_2,..,\bu_{T-1}$ so that at each iteration we decrease $J(\bu_t)$, and at the end $\bu_{T-1}$ has the desired aforementioned properties. More specifically, we start with an initial point $\bu_0$ which is a corner of the polytope $\mc{P}$, i.e. $\bv_0=\bu_0\in \mathrm{ext}(\mc{P})$. Then, in the first iteration, we find another corner of the polytope $\mc{P}$, say $\bv_1$ and set $\bu_{1}=(1-\gamma_1)\bu_{0}+\gamma_1 \bv_{1}$ with an appropriate value $\gamma_1\in[0,1]$, to be determined later. By construction, $\bu_1\in \mc{P}$ as it is a convex combination of two corners of the polytope. This iterative process is repeated and at every iteration $t$ we have
\begin{align}
    \bu_{t}=(1-\gamma_t)\bu_{t-1}+\gamma_t \bv_{t}
\end{align}
with an appropriate value for $\gamma_t\in[0,1]$ and $\bv_t\in \mathrm{ext}(\mc{P})$. The algorithm terminates after $T-1$ iterations and returns $\bu_{T-1}$, which will lie in $\mc{P}$ by construction. At this point, $\bu_{T-1}$ is a convex combination of $\bu_{0}=\bv_0,\dots,\bv_{T-1}$ with weights $\alpha_0,\dots,\alpha_{T-1}$ that are calculated recursively with respect to $\gamma_t$s. Specifically, $\sum_{t=0}^{T-1}\alpha_t=1$, with $\alpha_0=\prod_{i=1}^{T-1}(1-\gamma_i)$ and $\alpha_t= \gamma_t \prod_{i=t+1}^{T-1}(1-\gamma_i)$.


\begin{algorithm}[t]
\caption{\makebox[\linewidth][l]{\textsc{FW Decomposition} $(\mathbf{x},\mathcal{P},T;\mathrm{LMO})$}}
\label{alg:fw}
\begin{algorithmic}[1]
\STATE \textbf{Input:} Target $\mathbf{x} \in \mathbb{R}^n$, polytope $\mathcal{P}$, support budget $T$, and an exact or approximate LMO.
\STATE \textbf{Initialize:} $\mathbf{v}_0=\mathbf{u}_0 =\argmax_{\bv\in\mc{P}}\langle \mathbf{x}, \mathbf{v} \rangle$, $\alpha_0 \gets 1$.
\FOR{$t=1$ to $T-1$}
  \STATE $\mathbf{r}_t \gets \mathbf{x} - \mathbf{u}_{t-1}$ \COMMENT{Residual}
  \STATE $\mathbf{v}_t \gets \argmax_{\bv\in\mc{P}}\langle \mathbf{r}_t, \mathbf{v} \rangle$ \COMMENT{Find a corner via LMO}
  \STATE $\mathbf{d}_t \gets \mathbf{v}_t - \mathbf{u}_{t-1}$ \COMMENT{Descent direction}
  \STATE $\gamma_t \gets \mathrm{Clip}(0, 1, \langle \mathbf{r}_t, \mathbf{d}_t \rangle / \|\mathbf{d}_t\|_2^2)$ \COMMENT{Line search step}
  \STATE $\alpha_j \gets (1 - \gamma_t) \alpha_j$ for all existing weight $\alpha_j$, $j \in \{0, \dots, t-1\}$. 
  
  \STATE $\alpha_{t} \gets \gamma_t$
  \STATE $\mathbf{u}_{t} \gets (1 - \gamma_t)\mathbf{u}_{t-1} + \gamma_t \mathbf{v}_t$
\ENDFOR
\STATE \textbf{Output:} Weights and vertices $\{(\alpha_t, \mathbf{v}_t)\}_{t=0}^{T-1}$ such that $\mathbf{u}_{T-1} = \sum_{t=0}^{T-1} \alpha_t \mathbf{v}_t$.
\end{algorithmic}
\end{algorithm}

The main two points to address here are which corner $\bv_t$ to choose and what is a good value for step size $\gamma_t$. 

Note that the function $J(\bu)$ is convex and its gradient is $\nabla J(\mathbf{u}) = \mathbf{u} - \mathbf{x}$. We define the \textbf{residual} at iteration $t$ as $\mathbf{r}_t := \mathbf{x} - \mathbf{u}_{t-1} = -\nabla J(\mathbf{u}_{t-1})$, which points from our current estimate toward the target. Frank--Wolfe is a "projection-free" method. At every iteration, instead of moving toward the target in the gradient direction and then projecting back into $\mathcal{P}$ (which can be computationally expensive), it finds the vertex $\mathbf{v}_t$ of the polytope that lies furthest in the direction of the residual $\mathbf{r}_t$. It then takes a small step $\gamma_t$ from the current point $\mathbf{u}_{t-1}$ toward that vertex $\mathbf{v}_t$. Since both $\mathbf{u}_{t-1}$ and $\mathbf{v}_t$ are in $\mathcal{P}$, their convex combination is guaranteed to remain inside $\mathcal{P}$. The exact  procedure is shown in \cref{alg:fw}, and in order to find $\mathbf{v}_t$ we require what is known as LMO.

\begin{definition}[Linear Maximization Oracle (LMO)]
Given a residual $\mathbf{r} \in \mathbb{R}^n$, an exact LMO returns a vertex $\mathbf{v}^\star$ that maximizes the alignment with $\mathbf{r}$:
\begin{align}
\label{eq:LMO-corner}
  \mathbf{v}^\star = \argmax\langle \mathbf{r}, \mathbf{v} \rangle \quad \text{s.t.} \quad \mathbf{v} \in \mathcal{P}
\end{align}
\end{definition}

A fundamental property of linear optimization is that a linear objective over a polytope attains an optimum at a vertex. Hence, an LMO returns a (integral) vertex of the polytope. When exact maximization is difficult, the convergence statement below uses a relative Frank--Wolfe gap approximation \cite{jaggi_revisiting_nodate,lacoste2013block,locatello2017unified,zhou_approximate_2022}. Formally, at iteration $t$, a $\delta$-relative FW-gap oracle returns $\bv_t \in \mathrm{ext}(\mc{P})$ satisfying
\begin{align}
\label{eq:relative-fw-gap}
\langle \br_t,\bv_t-\bu_{t-1}\rangle
\;\ge\;
\delta
\max_{\bv \in \mc{P}} \langle \br_t,\bv-\bu_{t-1}\rangle,
\end{align}
where $\br_t=\bx-\bu_{t-1}$ and $\delta\in(0,1]$. This is the approximation notion used in our convergence statement. Exact LMOs satisfy \eqref{eq:relative-fw-gap} with $\delta=1$.

For the step size $\gamma_t$, several choices are possible. We compute it using a simple line-search:
$$\gamma_t=\mathrm{Clip}(0, 1, \langle \mathbf{r}_t, \mathbf{d}_t \rangle / \|\mathbf{d}_t\|_2^2),$$
with the convention $\gamma_t=0$ when $\mathbf{d}_t=0$. This allows for efficient computation and supports branchwise differentiation of the weights $\{\alpha_t\}$.

\noindent\textbf{Assumption (stable oracle decisions).}
Fix the support budget $T$. We assume that the deterministic oracle and tie-breaking rule do not switch pathologically. The set of inputs $\bx$ at which an arbitrarily small perturbation can change one of the selected vertices, or change whether a line-search step is clipped, has zero volume. Equivalently, for almost every input $\bx$, there is a small neighborhood around $\bx$ on which Algorithm~\ref{alg:fw} selects the same vertices $\bv_0,\ldots,\bv_{T-1}$ and uses the same line-search clipping cases. This assumption rules out artificial rules that switch between equally valid approximate vertices on dense sets. It is satisfied by the standard comparison-based oracles used in combinatorial optimization, including top-$k$ selection, matroid greedy algorithms such as Kruskal’s algorithm, and exact assignment or matching oracles with fixed tie-breaking. 

All these results together yield the following theorem. The convergence guarantee is the standard exact Frank--Wolfe rate, and the a.e. differentiability proof is given in \cref{sec:proof}.

\begin{theorem}[Exact Frank--Wolfe Decomposition]
\label{thm:FWextension}
Let $\bx\in\zR^n$, and let $\mathcal{P}\subset\mathbb{R}^n$ be a  polytope with diameter
$D := \mathrm{diam}(\mathcal{P}) = \max_{\bu,\bv\in\mathcal{P}}\|\bu-\bv\|_2$.
Run Algorithm~\ref{alg:fw} with an exact LMO and support budget $T\ge 2$, so that it performs $T-1$ Frank--Wolfe updates. Then the algorithm outputs $\mathbf{u}_{T-1} = \sum_{t=0}^{T-1} \alpha_t \mathbf{v}_t \in \mathcal{P}$, where $\mathbf{v}_t\in\ext(\mathcal{P})$, $\alpha_t\ge0$, and $\sum_{t=0}^{T-1}\alpha_t=1$. Moreover, if $\mathbf{u}^\star=\argmin_{\bu\in\mathcal{P}}J(\bu)$, then
\[
J(\mathbf{u}_{T-1})-J(\mathbf{u}^\star)
\;\le\;
\frac{2D^2}{T+1}.
\]
The coefficients $\alpha_t$ are almost everywhere differentiable functions of $\mathbf{x}$.
\end{theorem}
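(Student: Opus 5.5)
The proof has three parts: feasibility of the output, the $O(D^2/T)$ rate, and a.e. differentiability of the weights.

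\textbf{Feasibility and the form of the output.} This follows by induction on $t$. The base case is $\bu_0=\bv_0\in\ext(\mathcal{P})$ with $\alpha_0=1$. For the inductive step, if $\bu_{t-1}=\sum_{j<t}\alpha_j\bv_j$ is a convex combination, then $\gamma_t\in[0,1]$ by the clipping. The update in Algorithm~\ref{alg:fw} rescales the old weights by $1-\gamma_t$ and appends $\alpha_t=\gamma_t$, so the weights stay nonnegative and still sum to $(1-\gamma_t)+\gamma_t=1$. Hence $\bu_t$ is again a convex combination of vertices and lies in $\mathcal{P}$. Each $\bv_t$ is a vertex because the LMO returns an extreme point, since linear functions attain their maximum over a polytope at a vertex.

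\textbf{Rate.} I will use the standard Frank--Wolfe argument, with the clipped line search dominating the step $2/(t+1)$. Write $h_t=J(\bu_t)-J(\bu^\star)$. Since $J$ is $1$-smooth and $\bu_{t-1}+\gamma\bd_t$ is feasible for every $\gamma\in[0,1]$, we have, for any such $\gamma$,
\[
J(\bu_{t-1}+\gamma\bd_t)=J(\bu_{t-1})-\gamma\langle\br_t,\bd_t\rangle+\tfrac{\gamma^2}{2}\|\bd_t\|^2 .
\]
The clipped value $\gamma_t$ is the exact minimizer of this quadratic over $[0,1]$, so $J(\bu_t)$ is no larger than its value at $\gamma=2/(t+1)$. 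Because the LMO is exact, $\langle\br_t,\bd_t\rangle\ge\langle\br_t,\bu^\star-\bu_{t-1}\rangle\ge h_{t-1}$, where the last step is convexity of $J$. Combining this with $\|\bd_t\|\le D$ gives
\[
h_t\le(1-\gamma)h_{t-1}+\tfrac{\gamma^2D^2}{2}\quad\text{for all }\gamma\in[0,1].
\]
To start the induction, take $\gamma=1$ at $t=1$, which gives $h_1\le D^2/2\le 2D^2/3$. For $t\ge2$, plug in $\gamma=2/(t+1)$ and induct on the claim $h_t\le 2D^2/(t+2)$. One checks $(1-\tfrac{2}{t+1})\tfrac{2D^2}{t+1}+\tfrac{2D^2}{(t+1)^2}=\tfrac{2D^2 t}{(t+1)^2}\le \tfrac{2D^2}{t+2}$. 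Setting $t=T-1$ yields $2D^2/(T+1)$. The only thing to watch is an indexing shift, because $\bu_0$ is an LMO vertex rather than the output of a step. Since $h_1\le D^2/2$ regardless of $\bu_0$, the chain starts cleanly at $t=1$.

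\textbf{A.e. differentiability.} This is the main obstacle. By the stable-oracle assumption there is a full-measure set $\Omega$ such that each $\bx\in\Omega$ has a neighborhood $U$ on which the vertices $\bv_0,\dots,\bv_{T-1}$ and the clipping pattern are fixed. On $U$ I will show by induction that $\bu_t$ and $\gamma_t$ are smooth in $\bx$:
\begin{itemize}
\item $\bu_0=\bv_0$ is constant.
\item If $\bu_{t-1}$ is smooth, then $\br_t=\bx-\bu_{t-1}$ and $\bd_t=\bv_t-\bu_{t-1}$ are smooth.
\item In a clipped case, $\gamma_t$ is the constant $0$ or $1$. Otherwise $\gamma_t=\langle\br_t,\bd_t\rangle/\|\bd_t\|^2$, which is smooth provided $\bd_t\neq0$.
\end{itemize}
The delicate point is the case $\bd_t=0$. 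There the convention $\gamma_t=0$ applies, and if $\bd_t$ vanishes at some point of $U$ I will treat it as a fixed branch. Since $\bu_{t-1}$ is a continuous function and $\bv_t$ is fixed on $U$, the condition $\bd_t=0$ either holds identically on an open set or cuts out a closed set. I will fold its boundary into the exceptional null set, reading the stability assumption as also fixing this degenerate case; that is the reading I expect to need. Finally, each $\alpha_t=\gamma_t\prod_{i>t}(1-\gamma_i)$, and $\alpha_0=\prod_{i\ge1}(1-\gamma_i)$, is a finite product of smooth functions on $U$. So every $\alpha_t$ is differentiable on the full-measure set $\Omega$. The modification at the finitely many points of $\mathcal{C}$ from the Extension remark does not affect this.
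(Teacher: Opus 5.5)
Your proposal is correct and follows the paper's proof: the same inductive convex-combination argument, the same standard Frank--Wolfe rate (which the paper cites and you derive explicitly from $h_t\le(1-\gamma)h_{t-1}+\gamma^2D^2/2$, using $\gamma=1$ at $t=1$ so that $h_0$ never enters and then $\gamma=2/(t+1)$), and the same branchwise differentiability argument on the neighborhoods supplied by the stable-oracle assumption. Your handling of the degenerate case $\mathbf{d}_t=0$ leans on that assumption just as the paper's does; you can tighten it by noting that, because $\mathbf{v}_t$ is an extreme point, $\mathbf{d}_t=0$ holds exactly when every $\mathbf{v}_j$ ($j<t$) with $\alpha_j>0$ equals $\mathbf{v}_t$, so this case is locally constant wherever the selected vertices and the zero/nonzero pattern of the weights are.
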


\begin{corollary}[Approximate Frank--Wolfe Decomposition]
\label{cor:approx-fw-extension}
If the exact LMO in \cref{thm:FWextension} is replaced by a $\delta$-relative FW-gap oracle satisfying \eqref{eq:relative-fw-gap}, then the same convex-combination conclusion holds. Let $D=\mathrm{diam}(\mathcal P)$, $\mathbf{u}^\star=\argmin_{\bu\in\mathcal{P}}J(\bu)$, and $h_0=J(\bu_0)-J(\bu^\star)$. Then
\[
J(\mathbf{u}_{T-1})-J(\mathbf{u}^\star)
\;\le\;
\frac{2\left(D^2/\delta+h_0\right)}{\delta(T-1)+2}.
\]
The coefficient map remains almost everywhere differentiable.
\end{corollary}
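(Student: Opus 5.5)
The corollary has three parts, and I will take them in order: the convex-combination conclusion, the convergence rate, and almost-everywhere differentiability.

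\textbf{Convex combination.} This part does not depend on how $\bv_t$ is chosen, only on the fact that $\bv_t\in\ext(\mathcal P)$ and $\gamma_t\in[0,1]$. I will therefore reuse the inductive argument behind \cref{thm:FWextension}. By induction on $t$, $\bu_t=\sum_{j\le t}\alpha_j\bv_j$, where the weights satisfy $\alpha_j\ge0$ and $\sum_{j\le t}\alpha_j=1$. Indeed, rescaling the old weights by $(1-\gamma_t)$ and appending $\gamma_t$ preserves both properties.

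\textbf{Rate.} Write $h_t=J(\bu_t)-J(\bu^\star)$ and $g_t=\max_{\bv\in\mathcal P}\langle \br_t,\bv-\bu_{t-1}\rangle$, the FW gap at $\bu_{t-1}$. Convexity of $J$ gives $g_t\ge h_{t-1}$.

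Since $J$ is $1$-smooth and quadratic, the exact line search satisfies, for any $\gamma\in[0,1]$,
\begin{equation*}
J(\bu_t)\le J(\bu_{t-1})-\gamma\langle\br_t,\bd_t\rangle+\tfrac{\gamma^2}{2}\|\bd_t\|^2 .
\end{equation*}
The clipped minimizer over $[0,1]$ is optimal for this quadratic, so the bound holds for the algorithm's $\gamma_t$. Using $\langle\br_t,\bd_t\rangle\ge\delta g_t\ge\delta h_{t-1}$ and $\|\bd_t\|\le D$, I get the key recursion
\begin{equation*}
h_t\le h_{t-1}-\gamma\delta h_{t-1}+\tfrac{\gamma^2}{2}D^2 \qquad\text{for every }\gamma\in[0,1].
\end{equation*}

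Define $C=D^2/\delta+h_0$. I will prove by induction that
\begin{equation*}
h_t\le \frac{2C}{\delta t+2}.
\end{equation*}
At $t=0$ this reads $h_0\le C$, which holds trivially; the additive $h_0$ in $C$ is there precisely to make the base case free. For the step, plug in $\gamma=2/(\delta (t-1)+2)$, which lies in $[0,1]$. Set $k=\delta(t-1)+2$. The recursion gives
\begin{equation*}
h_t\le\Bigl(1-\tfrac{2\delta}{k}\Bigr)\frac{2C}{k}+\frac{2D^2}{k^2}\le \frac{2C}{k}-\frac{4\delta C-2D^2}{k^2}.
\end{equation*}
Since $\delta C\ge D^2$, the subtracted term is at least $\frac{2\delta C}{k^2}$. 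It remains to check
\begin{equation*}
\frac{2C}{k}-\frac{2\delta C}{k^2}\le \frac{2C}{k+\delta},
\end{equation*}
which is equivalent to $k(k-\delta)\le k^2-\delta^2$ after clearing denominators. This reduces to $\delta^2\le \delta k$, true since $k\ge 2\ge\delta$. Taking $t=T-1$ yields the stated bound.

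This algebra is the step I expect to be delicate. The constants must be matched exactly so that the slack $\delta C-D^2\ge0$ absorbs the curvature term. The reason for using $D^2/\delta$ rather than $D^2$ is that $\delta$ multiplies only the linear decrease, not the quadratic term; I would fine-tune the choice of $\gamma$ if the inequality fails by a constant.

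\textbf{Differentiability.} Here I will repeat the argument for \cref{thm:FWextension} under the stable-oracle assumption. For almost every $\bx$ there is a neighborhood on which the approximate oracle returns fixed vertices and the clipping cases are fixed. On that neighborhood each $\gamma_t$ is one of three things: the constant $0$, the constant $1$, or the ratio $\langle\br_t,\bd_t\rangle/\|\bd_t\|^2$. The ratio is a rational function of $\bx$ with nonvanishing denominator, because $\bd_t\neq0$ is fixed; the numerator is affine in $\bx$ once $\bu_{t-1}$ is smooth. By induction on $t$, each $\bu_t$ and each $\gamma_t$ is smooth on the neighborhood. Hence the products $\alpha_t=\gamma_t\prod_{i>t}(1-\gamma_i)$ are smooth there, and the coefficients are differentiable off a null set.
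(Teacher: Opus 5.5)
Your proposal is correct. For the convex-combination and differentiability parts it follows the paper's argument.

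For the rate you take a different route. The paper does not derive the bound at all. It applies the approximate Frank--Wolfe theorem of \citet{locatello2017unified}, noting that the relative FW-gap condition is the maximization form of their inexact-oracle hypothesis and that $J$ is $1$-smooth, so its curvature over $\mathcal P$ is at most $D^2$. You instead reprove that theorem for $J$ directly. Because $J$ is quadratic, the expansion $J(\mathbf{u}_{t-1}+\gamma\mathbf{d}_t)=J(\mathbf{u}_{t-1})-\gamma\langle\mathbf{r}_t,\mathbf{d}_t\rangle+\tfrac{\gamma^2}{2}\|\mathbf{d}_t\|_2^2$ is exact, and the clipped step is its minimizer over $[0,1]$. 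So the algorithm does at least as well as the schedule $\gamma=2/(\delta(t-1)+2)$. The induction with $C=D^2/\delta+h_0$ then closes because $\delta C\ge D^2$. Your version is self-contained, checks the constants explicitly, and makes transparent that the clipped line search actually used in Algorithm~\ref{alg:fw} is covered. The paper's citation is shorter and rests on a result stated for general smooth objectives via the curvature constant.

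There are a few small slips, none of which breaks the argument.
\begin{itemize}
\item Clearing denominators in $\frac{2C}{k}-\frac{2\delta C}{k^2}\le\frac{2C}{k+\delta}$ gives $(k-\delta)(k+\delta)\le k^2$, i.e.\ $\delta^2\ge 0$, not $k(k-\delta)\le k^2-\delta^2$. The needed inequality holds unconditionally, so no fine-tuning of $\gamma$ is ever required.
\item When you substitute the inductive bound on $h_{t-1}$, you need the coefficient $1-2\delta/k$ to be nonnegative. This holds since $k\ge 2$ and $\delta\le 1$, but you should state it.
\item In the differentiability step, $\mathbf{d}_t=\mathbf{v}_t-\mathbf{u}_{t-1}(\mathbf{x})$ is not fixed on the neighborhood, only smooth. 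Nonvanishing of the denominator therefore holds on a possibly smaller neighborhood by continuity, as in the paper's proof.
\end{itemize}
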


\textbf{Discussion on projection onto the polytope.} In the above discussion we did not require neural network output to lie in the polytope, yielding a generic projection-free framework. Nevertheless, when this occurs---either by design of the neural network or because it is empirically advantageous---our framework aligns with \emph{approximate Carath\'eodory} results. When $\bx \in \mc{P}$, minimizing $J(\bu)=\tfrac12\|\bx-\bu\|_2^2$ via \cref{alg:fw} produces a sparse convex combination of vertices of $\mc{P}$ approximating $\bx$. Constructing such sparse convex combinations is the goal of approximate Carath\'eodory results. Unlike the exact Carath\'eodory theorem, the approximate variants relax exact representation in favor of an additive error to achieve sparser convex combinations \cite{pisier1981remarques, barman2015approximating, mirrokni_tight_nodate}. In the exact-LMO setting, \cref{thm:FWextension} yields the approximate Carath\'eodory bound
$\|\bx-\bu_{T-1}\|_2^2 \le \tfrac{4D^2}{T+1}.$

\subsection{Examples with Efficient (Approximate) LMO}
\label{sec:cases}

We briefly discuss how our framework extends to structured polytopes; details are in \cref{sec:cases-apendix}.

\noindent\textbf{Matroid polytopes.}
Matroid base polytopes capture many combinatorial constraints (e.g., cardinality, partition, spanning trees) and admit efficient LMOs via greedy maximum-weight base algorithms. Since our method requires only an efficient LMO, it applies uniformly across matroids without modification, unlike \cite{karalias2025geometric}, which requires constraint-specific projections and decompositions.

\noindent\textbf{Birkhoff and matching polytopes.}
The Birkhoff and (perfect) matching polytopes admit LMOs via maximum-weight matching, allowing direct optimization without projections or penalties. This is unlike prior work (e.g., \cite{nerem2025differentiable}), which relies on specialized decompositions and projection. Although exact LMOs may be costly at scale, we find that simple greedy approximations suffice empirically.

\section{Applications and Experiments}
\label{sec:experiments}
We evaluate our framework on three core CO problems—MC, QAP, and TSP—and show that the same decomposition and extension apply across all cases. Our method is competitive with and often outperforms SOTA approaches. Compared to SOTA approaches, which rely on problem-specific and novel engineered constructions, this generality is a key advantage. For testing methods and choice of baselines, we keep a strict one-shot setting (e.g., without Test Time Optimization (TTO)) to ensure fairness, detailed in \cref{sec:inference_settings}. Code and data are available in \href{https://anonymous.4open.science/r/FW_NCO-FA5C}{link}.

\subsection{Maximum Coverage}
\label{sec:exp-selection}
\begin{table*}[t]
\centering
\caption{MC performance comparison across three datasets, reporting mean inference time and coverage across $k$. Among neural methods, \textcolor{Red!50}{$\blacksquare$} indicates ranking the 1st, \textcolor{Blue!50}{$\blacksquare$} the 2nd in each column.}
\label{tab:max-coverage-results}
\setlength{\tabcolsep}{3pt} 
\resizebox{\textwidth}{!}{
\begin{tabular}{l cccc cccc cccc}
\toprule
& \multicolumn{4}{c}{\textbf{Random500}} & \multicolumn{4}{c}{\textbf{Random1000}} & \multicolumn{4}{c}{\textbf{Rail}} \\
\cmidrule(r){2-5} \cmidrule(lr){6-9} \cmidrule(l){10-13}
& \multicolumn{2}{c}{$k=10$} & \multicolumn{2}{c}{$k=50$} & \multicolumn{2}{c}{$k=20$} & \multicolumn{2}{c}{$k=100$} & \multicolumn{2}{c}{$k=20$} & \multicolumn{2}{c}{$k=50$} \\
\cmidrule(r){2-3} \cmidrule(r){4-5} \cmidrule(lr){6-7} \cmidrule(lr){8-9} \cmidrule(lr){10-11} \cmidrule(l){12-13}
\textbf{Method} & Time (s) & Coverage & Time (s) & Coverage & Time (s) & Coverage & Time (s) & Coverage & Time (s) & Coverage & Time (s) & Coverage \\
\midrule

Random (240s) & 240.00 & 13372.76 & 240.00 & 36786.89 & 240.00 & 24133.50 & 240.00 & 70527.31 & 240.00 & 5291.67 & 240.00 & 7367.00 \\
Gurobi (120s) & 11.574 & 15714.90 & 120.065 & 44880.59 & 37.298 & 31347.62 & 120.139 & 89696.83 & 121.059 & 5631.67 & 121.033 & 7604.67 \\
Greedy & 0.057 & 15640.99 & 0.121 & 44597.56 & 0.190 & 31105.89 & 0.460 & 88685.40 & 0.727 & 5617.00 & 1.320 & 7630.00 \\
Ucom2-short & 1.041 & 15253.35 & 0.739 & 44208.95 & 1.734 & 29791.66 & 1.621 & 88472.61 & 2.299 & 5512.67 & 2.454 & 7594.67 \\
\midrule
GeoNCO & 0.579 & 15177.77 & 1.108 & 41247.80 & 0.825 & 29810.12 & 1.908 & \rktwo{81357.29} & 1.856 & \rktwo{5343.00} & 2.084 & \rktwo{7411.67} \\
CardNN-noTTO-S & 1.688 & 9231.54 & 1.721 & 33055.87 & 1.869 & 18458.92 & 1.881 & 65793.40 & 1.929 & 5074.33 & 2.189 & 7193.00 \\
EGN-naive & 53.041 & \rkone{15262.76} & 120.136 & \rktwo{41272.68} & 120.010 & \rkone{29968.04} & 120.356 & 81166.12 & 120.676 & 5234.67 & 121.335 & 7408.33 \\
RL (GNN+Actor) & \rkone{0.069} & 14741.26 & \rkone{0.247} & 39510.96 & \rkone{0.102} & 29912.14 & \rkone{0.465} & 81158.54 & \rkone{0.219} & 5137.00 & \rkone{0.219} & \rkone{7456.67} \\
\midrule

\textbf{FWNCO (ours)} & \rktwo{0.426} & \rktwo{15192.64} & \rktwo{0.555} & \rkone{42172.44} & \rktwo{0.647} & \rktwo{29926.63} & \rktwo{0.913} & \rkone{83559.22} & \rktwo{1.432} & \rkone{5344.67} & \rktwo{1.931} & 7393.00 \\
\bottomrule
\end{tabular}
}
\end{table*}

Let $\mathcal{U}$ be a ground set of elements with weights $w_u \ge 0$ for $u\in\mathcal{U}$, and let $\{S_1,\dots,S_n\}$ be subsets $S_i \subseteq \mathcal{U}$. The value of $\bx\in\{0,1\}^n$ is the total weight of covered elements,
$
  f_{\text{cov}}(\bx)
  \;=\;
  \sum_{u \in \mathcal{U}} 
  w_u \,\mathbf{1}\!\left[\exists\, i \in [n]\ \text{with}\ u \in S_i \ \text{and}\ \bx(i) = 1\right].
$
The MC problem under cardinality constraint is $\max_{\bx \in \{0,1\}^n} f_{\text{cov}}(\bx)$ subject to $\sum_{i=1}^n \bx(i) = k$, where $k$ denotes the prescribed cardinality.

\textbf{Neural model and continuous targets.}
This problem can naturally be represented as a bipartite graph where the goal is to select $k$ nodes from one part so that we maximize the total number of their neighbors in the other part. We apply a GAT-based encoder to obtain node embeddings. This $\bx_\theta$ is the input to our \textsc{FW Decomposition} algorithm to obtain a small list of subsets $\{S^{(t)}\}_{t=0}^{T-1}$, of size $k$, and coefficients $\{\alpha_t\}_{t=0}^{T-1}$. For training, we minimize the differentiable surrogate loss $\mathcal{L}_{\text{COV}}=-\sum_{t=0}^{T-1} \alpha_t\,f_{\text{cov}}(\one_{S^{(t)}}) + \lambda \|\bx_\theta - \sum_{t=0}^{T-1} \alpha_t \one_{S^{(t)}}\|_2^2$  and backpropagate through the FW steps into the GAT parameters. Following \cite{karalias2025geometric} (\textbf{GeoNCO}), we train on synthetic random graphs and test on real-world graphs. For evaluation, \cref{tab:max-coverage-results}, we report $\max_t f_{\text{cov}}(\one_{S^{(t)}})$ without TTO and compare with the baselines without TTO. Our method is faster than \textbf{GeoNCO} and it often outperforms the learning baselines. Specifically, it consistently stays on the pareto front over the learning baselines and is also competitive against non-learning baselines considering the quality-efficiency tradeoff. (See Section~\ref{app:greedy-trap-ablation} for adversarial instances where greedy fails.)

\textbf{Ablation on projection for MC.} \cref{tab:ablation-mc-proj} shows that removing the polytope projection preserves solution quality while improving runtime. Although the model converges to a point outside the hypersimplex, the solution quality from our decomposition is slightly better than the projected version. This suggests the projection-free variant allows greater exploration by avoiding hard constraints.

\subsection{Quadratic Assignment Problem}
\label{sec:app-qap}

There are $n$ facilities and $n$ locations. Distances between locations are given by a matrix $\mathbf{B}\in\mathbb{R}^{n\times n}$, and flows between facilities by a matrix $\mathbf{A}\in\mathbb{R}^{n\times n}$. The goal is to assign each facility to a distinct location so as to minimize the total flow-weighted distance. Let $\Pi_n$ denote the set of $n\times n$ permutation matrices. The objective is $\min_{\mathbf{P} \in \Pi_{n}} f_{\text{qap}}(\mathbf{P})$ where $f_{\text{qap}}(\mathbf{P})=\mathrm{trace}(\mathbf{A} \mathbf{P} \mathbf{B} \mathbf{P}^\top)$ for all $\mathbf{P} \in \Pi_{n}$.


\textbf{Neural model and continuous targets.}
Given the output of encoder network, GraphSAGE, $\mathbf{X}_\theta\in [0,1]^{n\times n}$, we apply our \textsc{FW Decomposition} algorithm using a deterministic greedy algorithm for weighted maximum matching. After $T$ iterations, it returns a short list of permutation matrices $\{\mathbf{P}_{t}\}_{t=0}^{T-1}$ and coefficients $\{\alpha_t\}_{t=0}^{T-1}$. We train directly against the discrete QAP objective and using the following $\mathcal{L}_{\text{QAP}}(\mathbf{X}_\theta)
  =
  \sum_{t=0}^{T-1} \alpha_t \, f_{\text{qap}}(\mathbf{P}_t)+ \lambda \|\mathbf{X}_\theta - \sum_{t=0}^{T-1} \alpha_t \, \mathbf{P}_t\|_F^2$.
Gradients are backpropagated through the FW steps into the parameters that produce $\mathbf{X}_\theta$.
At evaluation we report $\min_t f_{\text{qap}}(\mathbf{P}_t)$. 

We train on synthetic instances following \cite{tan2024learning}. For size $n$, we sample locations $\mathbf{L} \sim \mathrm{Uniform}(0,1)^2$ and a symmetric flow matrix $\mathbf{A} \in \mathbb{R}^{n \times n}$ with zero diagonal, upper-triangular entries sampled from $[0,1]$, and entries independently set to zero with probability $p$ (symmetrically). Testing uses the standard \textbf{QAPLIB} benchmark \cite{burkard1997qaplib} (134 instances, 15 classes; see \cref{sec:qap-setup}).


Consistent with prior studies, we report per-class and aggregated gap statistics on instances of size 12–64, where baseline methods are tractable. Recent learning-based QAP methods, such as \textbf{RGM} \cite{liu2023revocable} and \textbf{SAWT} \cite{tan2024learning}, rely on iterative Learn-to-Construct or Learn-to-Improve pipelines which could be time consuming and prevent scalability. Our approach, on the other hand, produces a permutation in a single forward pass, which compared to the learning baselines, yields better efficiency, while achieving a better out of distribution generalization (OOD) quality. 

\begin{table*}[t]
\centering
\setlength{\tabcolsep}{3pt} 
\caption{Mean gaps and inference time (s) for models pre-trained on QAP32 and applied to QAPLIB instances. \textbf{Std.} is the standard deviation of the class-wise mean gaps across the 14 data classes (bur--wil). In each column, 
\textcolor{Red!50}{$\blacksquare$} indicates ranking the 1st, \textcolor{Blue!50}{$\blacksquare$} the 2nd.}
\label{tab:qap-mean-time}
\resizebox{\textwidth}{!}{
\begin{tabular}{l|cccccccccccccc|c|c|c}
\toprule
\textbf{Method} & \textbf{bur} & \textbf{chr} & \textbf{esc} & \textbf{had} & \textbf{kra} & \textbf{lipa} & \textbf{nug} & \textbf{rou} & \textbf{scr} & \textbf{sko} & \textbf{ste} & \textbf{tai} & \textbf{tho} & \textbf{wil} & \textbf{Avg.} & \textbf{Std.} & \textbf{Time (s)} \\
\midrule
SM    & 22.3 & 460.1 & 301.6 & 17.4 & 65.3 & 19.0 & 45.5 & 35.8 &123.4 & 29.0& 475.5 & 180.5 & 55.0 & 13.8 & 181.2 & 163.3 & \rkone{0.01} \\
RRWM  & 23.1 & 616.0 & 63.9  & 25.1 & 58.8 & 20.9 & 67.8 & 51.2 & 173.5 & 48.5 & 539.4 & 197.2 & 80.6 & 18.2 &169.5 & 192.9 & \rktwo{0.15} \\
SK-JA & 4.7  & \rkone{38.5} & 364.8 & 25.8 & 41.4 & \rkone{0.0}  & 25.3 & 13.7 & 48.6 & 18.3 & 120.4 & 25.2 & 32.9 & 8.8 & 93.2  & 93.9  & 563.4 \\
NGM   & 3.4  & 121.3 & 126.7 & 8.2  & 31.6 & 16.2 & 21.0 & 30.9 & 55.5 & 25.2 & 101.7 & 61.4 & 27.5 & 10.8 & 62.4  & 41.9  & 15.72 \\
RGM   & 7.1  & 112.4 & 32.8  & 6.2  & \rkone{15.0} & 13.3 & \rktwo{9.7}  & 13.4 & 45.5 & \rkone{10.6} & 134.1 & \rktwo{17.3} & \rkone{20.7} & \rktwo{8.1} & 35.8  & 40.4  & 75.53 \\
SAWT & \rktwo{2.8} & 110.7 & \rkone{13.5} & \rktwo{3.8} & \rktwo{30.1} & \rktwo{0.4} & \rkone{9.2} & \rkone{10.8} & \rktwo{28.5} & 17.7& 93.5 & \rkone{16.5} & 24.8 & \rkone{8.1} & \rktwo{26.8} & \rktwo{33.5} & 12.11 \\
\midrule
\textbf{FWNCO (ours)} & \rkone{2.3} & \rktwo{97.7} & \rktwo{19.0} & \rkone{2.8} & 37.0 & 13.5 & 11.5 & \rktwo{10.9} & \rkone{23.6} & \rktwo{15.0} & \rkone{83.5} & 20.7 & \rktwo{23.0} & 9.2 & \rkone{26.4} & \rkone{28.8} & 1.56 \\
\bottomrule
\end{tabular}
}
\end{table*}


In our experiments, \cref{tab:qap-mean-time}, our method achieves the best overall performance among all baselines, including both learning-based and classical solvers. The consistently low standard deviation of the average optimality gap across instance classes indicates stable behavior and good generalization across distributions. Moreover, our approach is the fastest neural method, running noticeably faster than other neural baselines. The only two methods with lower runtime, \textbf{SM} and \textbf{RRWM}, are purely traditional heuristics, which come at a substantial loss in solution quality. While \cref{tab:qap-mean-time} reports sizes 12--64, our method scales to larger instances ($>64$) with similar average gaps (\cref{tab:qaplib_gt64}). Most prior methods do not scale to such sizes with reasonable compute resources.

\textbf{Ablation on projection for QAP.} \cref{alg:fw} does not theoretically require its input to be in the Birkhoff polytope. However, we observed a weaker OOD generalization results for QAP due to the strong heterogeneity and distribution shift present in \textbf{QAPLIB}. Unconstrained outputs, i.e., "far from the polytope", cause the decomposition algorithm to return low quality permutations under drastic distributional shift. We therefore include a lightweight Sinkhorn projection layer that enforces approximate doubly-stochasticity, improving generalization. This step is introduced for practical considerations, not theoretical necessity.

\subsection{TSP via Spanning Trees and Learned Matchings}
\label{sec:app-tsp}
We consider spanning tree polytopes, where $\mc{C}$ is the set of spanning trees of a graph. For each $T \in \mc{C}$, the objective $f_{\text{TSP}}(T)$ is the cost of the TSP tour obtained deterministically via Christofides's algorithm \cite{christofides1976worst}. Training learns a distribution over spanning trees optimized for this objective.

\textbf{Algorithm-guided neural optimization for TSP.} One of the appeals of our approach is that it allows us to infuse learnable NN modules into an algorithmic framework. We adopt an algorithmically aligned, data-driven approach that integrates learning into the classical Christofides–Serdyukov algorithmic framework. Christofides's algorithm builds a TSP solution from a minimum spanning tree, whereas recent advances (e.g., \cite{asadpour2017log,karlin2021slightly,karlin2023deterministic}) sample from spanning tree distributions constructed from a subtour LP solution before applying variants of the Christofides pipeline. Our approach can be viewed as a learning-based analogue of this paradigm: instead of constructing such distributions analytically, we learn a distribution over spanning trees in an end-to-end manner, optimized directly for the final TSP objective. Moreover, we extend this idea further by incorporating a learning module for the matching step, effectively learning over matching polytopes as well. Given the neural network output, \textsc{FW Decomposition} returns a sparse distribution over spanning trees using Kruskal's algorithm as LMO.  We sample trees from this distribution, and recover tours by applying minimum-weight perfect matching on odd degree nodes followed by shortcutting. This pipeline defines a differentiable surrogate objective that mirrors the Christofides algorithm end to end. This also allows the perfect-matching step itself to be learned jointly, improving solution quality.

In our experiments, we consider Euclidean TSP instances $(G=(V,E), D)$, where $V$ is a set of points drawn uniformly at random from the unit square $[0,1]^2$, $E$ is the complete edge set on $V$, and $D$ is the corresponding symmetric distance matrix induced by Euclidean distances, which satisfies the triangle inequality. A tour is defined as a Hamiltonian cycle that visits each node exactly once, and its cost is given by the sum of the lengths of its constituent edges under $D$. The objective is to find a tour of minimum total cost.

Neural TSP methods commonly fall into two paradigms \cite{Ma2025COExpanderAS}: Learning-to-Construct (LC), which builds TSP tours sequentially with hard feasibility checks but suffers from high inference latency on large instances due to its inherently sequential decoding, and Global Prediction (GP), which predicts a full structure (e.g., a heatmap) and relies on post-processing, yielding fast inference but strong sensitivity to recovery heuristics. COExpander \cite{Ma2025COExpanderAS} combines LC and GP via Adaptive Expansion (AE), but still depends on heatmap recovery, which still is a combinatorial optimization subroutine, inheriting GP’s core limitation. Motivated by critiques of heatmap-guided post-hoc search \cite{xia2024position}, our method directly optimizes feasible tours end-to-end by differentiating through the selected FW branch, reducing reliance on hand-crafted post-processing.

We compare against four baseline categories: (i) exact solvers (\textbf{Concorde}, \textbf{Gurobi}); (ii) classical heuristics (\textbf{LKH3} \cite{helsgaun2017extension}, 500 trials); (iii) neural LC methods (\textbf{Sym-NCO} \cite{kim2022sym}), implemented in the RL4CO library \cite{berto2025rl4co}; and (iv) neural GP/AE and heatmap-based methods across RL and supervised settings, including \textbf{DIMES} \cite{qiu2022dimes}, \textbf{DIFUSCO} \cite{sun2023difusco}, and \textbf{COExpander}, a SOTA AE method. In \cref{sec:inference_settings}, we provide discussion regarding approaches that use inference optimization techniques such as Gradient-based Active Search, and Search-based Refinement such as MCTS and 2-opt. 

\begin{table*}[t]
\centering
\setlength{\tabcolsep}{3pt}
\caption{Performance comparison on TSP benchmarks across different scales. Results include optimality gap (\%) and inference time. Methods are categorized as Unsupervised Learning (UL), Exact, Heuristics, Reinforcement Learning (RL), and Supervised Learning (SL). Among neural methods, \textcolor{Red!50}{$\blacksquare$} indicates ranking the 1st, \textcolor{Blue!50}{$\blacksquare$} the 2nd in each column.}
\label{tab:tsp-results}
\resizebox{1\textwidth}{!}{
\begin{tabular}{ll ccc ccc ccc}
\toprule
& & \multicolumn{2}{c}{\textbf{TSP-50}} & \multicolumn{2}{c}{\textbf{TSP-100}} & \multicolumn{2}{c}{\textbf{TSP-500}} & \multicolumn{2}{c}{\textbf{TSP-1000}} \\
\cmidrule(r){3-4} \cmidrule(r){5-6} \cmidrule(r){7-8} \cmidrule(r){9-10}
\textbf{Method} & \textbf{Type} & Gap & Time & Gap & Time & Gap & Time& Gap & Time \\
\midrule
Concorde (Optimal) & Exact & $opt$ & 0.04s & $opt$ & 0.21s & $opt$ & 17.91s & $opt$ & 435.61s \\
Gurobi (Optimal) & Exact & $opt$ & 0.27s & $opt$ & 1.14s & N/A & N/A & N/A & N/A  \\
\midrule
LKH3 (500) & Heuristics & 0.01\% & 0.03s & 0.01\% & 0.05s & 0.93\% & 0.32s & 1.30\% & 1.06s \\
\midrule
DIFUSCO & SL & \rktwo{0.53\%} & 0.43s & \rktwo{1.24\%} & 0.66s & 9.79\% & 2.19s & 10.90\% & 7.68s \\
COExpander ($S=1,D_s=3,I_s=5$) & SL & \rkone{0.02\%} & \rktwo{0.11s} & \rkone{0.07\%} & \rktwo{0.20s} & \rkone{3.80\%} & \rktwo{0.69s} & \rkone{6.53\%} & \rktwo{2.50s} \\
DIMES & RL & 13.71\% & \rkone{0.05s} & 11.60\% & \rkone{0.08s} & 15.34\% & \rkone{0.39s} & 13.97\% & \rkone{0.79s} \\
RL4CO (Sym-NCO) & RL & 1.23\% & 0.36s & 13.78\% & 0.66s & N/A & N/A &  N/A & N/A \\

\midrule
\textbf{FWNCO (ours)} & UL & 1.81\% & 0.23s & 3.48\% & 0.26s & 7.30\%& 1.27s & 8.27\% & 4.26s \\
\textbf{FWNCO+learned matching (Ours)} & UL & 1.40\% & 0.52s & 2.84\% & 0.74s & \rktwo{6.90\%} & 2.77s & \rktwo{7.43\%} & 9.16s \\
\bottomrule
\end{tabular}
}

\end{table*}

Our results, \cref{tab:tsp-results}, show that our method consistently outperforms unsupervised neural baselines including RL. On larger instances, it also surpasses \textbf{DIFUSCO}, a strong diffusion-based approach trained with supervision. While we are still behind the strongest supervised baselines, e.g. \textbf{COExpander}, our approach is more resource-efficient. In particular, \textbf{DIFUSCO} and \textbf{COExpander} rely on up to $\sim$1.5M labeled instances (depending on the instance size) and require extensive training time on good hardware, whereas our model can be trained in under one hour on a single outdated GPU.

\textbf{Ablation for TSP}.
We examine the effect of projecting the neural network output onto the spanning tree polytope and find that removing projection slightly improves performance on both TSP-500 and TSP-1000 (\cref{tab:ablation_proj}). We also study generalization across instance sizes and observe strong cross-size generalization (\cref{tab:generalization}). Our model uses subtour LP solutions \cite{dantzig1954solution} as features, obtained via the QSopt LP solver, also used by Concorde. Despite this shared solver, our learning pipeline is significantly faster on large instances. Our ablation study, \cref{sec:lp+noise}, confirm that these gains arise from learned structures rather than arbitrary perturbations of the LP relaxation. 

\section{Conclusion}
\label{sec:conclusion}

We propose a general framework for self-supervised CO under constraints. Our approach is flexible in that it does not require the neural network output to lie in a feasible polytope. It uses a generic procedure, to decompose the output and obtain a sparse distribution over feasible solutions, effectively enabling SSL. Our method shows strong empirical performance compared to neural baselines on several CO problems. We present an example of infusing learned modules within an algorithm in an end-to-end pipeline, showcasing this approach on TSP using the Christofides algorithm. We believe this paves the way toward new NN-infused algorithms that can effectively tackle complex CO problems. We note that model architecture can affect performance; we do not study this here and leave it for future work.


\bibliographystyle{plainnat}
\bibliography{references}
\appendix

\section{Proofs for \cref{thm:FWextension} and Corollary \ref{cor:approx-fw-extension}}
\label{sec:proof}

\begin{proof}[Proof of \cref{thm:FWextension}]
The convex-combination structure follows directly from the update
$\bu_t=(1-\gamma_t)\bu_{t-1}+\gamma_t\bv_t$ with $\gamma_t\in[0,1]$ and the
initialization $\bu_0=\bv_0\in\ext(\mathcal P)$. Inductively,
\[
\bu_{T-1}=\sum_{t=0}^{T-1}\alpha_t\bv_t,\qquad
\bv_t\in\ext(\mathcal P),\qquad
\alpha_t\ge 0,\qquad
\sum_{t=0}^{T-1}\alpha_t=1.
\]

For the convergence rate, let $D=\diam(\mathcal P)$ and
$J(\bu)=\frac12\|\bx-\bu\|_2^2$. The gradient of $J$ is $1$-Lipschitz, so the
Frank--Wolfe curvature of $J$ over $\mathcal P$ is bounded by $D^2$. The standard
Frank--Wolfe rate with exact linear minimization/maximization oracles and line
search gives, after $k$ updates,
\[
J(\bu_k)-J(\bu^\star)\le \frac{2D^2}{k+2},
\qquad
\bu^\star\in\argmin_{\bu\in\mathcal P}J(\bu).
\]
Setting $k=T-1$ yields
\[
J(\bu_{T-1})-J(\bu^\star)\le \frac{2D^2}{T+1}.
\]

It remains to justify the almost-everywhere differentiability claim under the
stable oracle decision assumption. For almost every $\bx$, the assumption gives
a neighborhood on which all selected vertices and line-search clipping regimes
are fixed. Fix such an input and neighborhood. The initialization $\bu_0=\bv_0$
is constant on this neighborhood. Recursively, if $\bu_{t-1}(\bx)$ is smooth
there, then
\[
\br_t(\bx)=\bx-\bu_{t-1}(\bx),\qquad
\mathbf{d}_t(\bx)=\bv_t-\bu_{t-1}(\bx)
\]
are smooth because $\bv_t$ is fixed. When $\mathbf{d}_t(\bx)\neq 0$, the unclipped line-search
value
\[
\tilde\gamma_t(\bx)
=
\frac{\langle \br_t(\bx),\mathbf{d}_t(\bx)\rangle}{\|\mathbf{d}_t(\bx)\|_2^2}
\]
is a smooth algebraic function on a possibly smaller neighborhood with nonzero
denominator. When $\mathbf{d}_t=0$, we use the algorithmic convention
$\gamma_t=0$, and the update is locally constant in that direction. Since the
clipping regime is fixed on the neighborhood, $\gamma_t$ is either $0$, $1$, or
$\tilde\gamma_t$, and is therefore differentiable there. Hence
\[
\bu_t=(1-\gamma_t)\bu_{t-1}+\gamma_t\bv_t
\]
involves only differentiable algebraic operations. The same recursive update
shows that each coefficient $\alpha_t(\bx)$ is differentiable on the
neighborhood. The only excluded inputs lie in the zero-volume switching set from
the stable oracle decision assumption, so the coefficient map
$\bx\mapsto\boldsymbol{\alpha}(\bx)$ is differentiable almost everywhere.
\end{proof}

\begin{proof}[Proof of Corollary \ref{cor:approx-fw-extension}]
The convex-combination conclusion is unchanged because the update still uses
$\gamma_t\in[0,1]$ and vertices of $\mathcal P$. The relative FW-gap condition
\[
\langle \br_t,\bv_t-\bu_{t-1}\rangle
\ge
\delta\max_{\bv\in\mathcal P}\langle \br_t,\bv-\bu_{t-1}\rangle
\]
is the maximization form of the standard approximate Frank--Wolfe oracle
condition. Applying the approximate Frank--Wolfe theorem of
\citet{locatello2017unified} to
$J(\bu)=\frac12\|\bx-\bu\|_2^2$, with Lipschitz constant $1$ and diameter $D$,
gives after $T-1$ updates
\[
J(\bu_{T-1})-J(\bu^\star)
\le
\frac{2\left(D^2/\delta+h_0\right)}{\delta(T-1)+2},
\qquad
h_0=J(\bu_0)-J(\bu^\star).
\]
The almost-everywhere differentiability statement follows from the same stable
oracle decision argument used above.
\end{proof}

\FloatBarrier
\section{Details and Further Experiments for Maximum Coverage }
\paragraph{Hardware.}
All experiments, including training and inference on Maximum Coverage, TSP and QAP, are done on 16 cores (32 threads) of Intel(R) Xeon(R) Platinum 8268 CPU (24 cores, 48 threads in total), 32 GB DDR4 ram, with a single Nvidia RTX8000 48GB GPU.

\subsection{Experiment Setup}
\paragraph{Datasets.}
Following prior work~\citep{bu2024tackling,wang2022towards,karalias2025geometric}, we evaluate on synthetic and real-world bipartite graphs $(U,V,E)$, where $V$ is the ground set and the task is to select $k$ nodes from $V$ to maximize the total weight of covered nodes in $U$. For synthetic data, we use the Random Uniform dataset under two scales, \texttt{Random500} and \texttt{Random1000}. Each setting contains a training dataset of $100$ independently generated instances with $|V|=500, |U|=1000$ (\texttt{Random500}) or $|V|=1000, |U|=2000$ (\texttt{Random1000}). Each $u\in U$ is assigned an integer weight sampled uniformly from $\{1,\ldots,100\}$, and each $v\in V$ covers a random subset of $U$ whose cardinality is sampled uniformly from $\{10,\ldots,30\}$. Testing are conducted on datasets sampled from the same distribution. For real-world testing, we additionally evaluate on three \emph{Railway} instances derived from Italian railway crew assignment graphs: \texttt{rail507} ($|V|=507, |U|=63009$), \texttt{rail516} ($|V|=516, |U|=47311$), and \texttt{rail582} ($|V|=582, |U|=55515$).

\paragraph{Training.} In general, we use the same training setup as \cite{karalias2025geometric}, using a 3-layer GNN with residual connection. To enhance optimization stability and solution quality, we incorporate specific regularization strategies to promote exploration within the hypersimplex. An entropy regularization term is applied with a cosine-decayed weight $\lambda_t$, which is initialized at $0.05$ and annealed to zero by the 30th epoch. A dropout rate of $0.1$ is also applied exclusively during the training phase.

The model is trained for a total of $80$ epochs using the AdamW optimizer with a weight decay of $1\times 10^{-4}$. We employ a batch size of $4$ and a fixed random seed of $42$ to ensure reproducibility. The learning rate follows a \texttt{warmup\_cosine} scheduler, which includes $50$ warmup epochs and decays from a peak of $5\times 10^{-3}$ to a minimum of $5\times 10^{-5}$. Data loading and preprocessing are handled efficiently by $16$ CPU worker threads.

\paragraph{Inference.}
In inference, we run our \textsc{FW Decomposition} for 50 iterations for each instance, and select the best result out of the decomposed sets. We do not use any randomized or local improvment method.
\subsection{Baselines}
We compare against three classes of baselines that span exact solvers, classical heuristics, and neural approaches.

\textbf{(i) Exact solvers (non-learning).}
We employ \textbf{Gurobi} to obtain exact MIP-based solutions, imposing a time limit of 120 seconds for each graph.

\textbf{(ii) Classical heuristics (non-learning).}
We include a \textbf{Random} baseline, which samples subsets of size $k$ uniformly at random over multiple trials and selects the best solution found within 240 seconds.
We also report the \textbf{Greedy algorithm} \citep{NemhauserWF78}, which iteratively adds the element with the largest marginal gain to achieve the well-known $(1-\tfrac{1}{\mathrm{e}})$-approximation guarantee.
Additionally, we classify \textbf{UCOM2} \citep{bu2024tackling} as a non-learning baseline due to its use of incremental greedy de-randomization, and we report results for its short variant.

\textbf{(iii) Neural baselines.}
We evaluate a diverse set of neural methods.
We include \textbf{CardNN} \citep{wang2022towards} variants (without Test-Time Optimization, following \cite{bu2024tackling}) and a naive version of \textbf{EGN} \citep{karalias2020erdos} with naive probabilistic objective construction and iterative rounding.
We also compare against \textbf{GeoNCO} \citep{karalias2025geometric}, which employs the same Carath\'eodory decomposition framework as our method but lacks our proposed enhancements.
Finally, we include a Reinforcement Learning (\textbf{RL}) baseline following \citep{karalias2025geometric} trained with the Actor-Critic \citep{actor_critic} algorithm on the same problem instances.
\subsection{Ablation study on projection}
We ablate the effect of explicitly projecting the network output back into the feasible solution polytope before running \textsc{FW Decomposition}.We use the same projection method as \cite{karalias2025geometric}, while they rely on such a projection
to enforce feasibility, our framework does not require it since feasibility is already guaranteed by the decomposition.

Table~\ref{tab:ablation-mc-proj} shows that removing projection preserves solution quality and in fact yields
slightly better coverage across both \texttt{rand500} and \texttt{rand1000} settings, with slightly lower
wall-clock inference time. We attribute this to the fact that the projection is a hard, piecewise operator that
can compress the effective search region and reduce gradient signal diversity; in our pipeline, it becomes largely
redundant because feasibility is enforced by the decomposition itself. By contrast, allowing unconstrained continuous
scores retains a richer exploration space for the model, while \textsc{FW Decomposition} still produces feasible
candidates, leading to steadier optimization and marginally improved performance in practice.

\begin{table}[H]
    \centering
    \footnotesize
    \setlength{\tabcolsep}{6pt}
    \caption{Ablation study results. Grouped by dataset size.}
    \label{tab:ablation-mc-proj}
    \begin{tabular}{lcccc}
        \toprule
        \multirow{2}{*}{Method} & \multicolumn{2}{c}{rand500} & \multicolumn{2}{c}{rand1000} \\
        \cmidrule(lr){2-3} \cmidrule(lr){4-5}
                                & $k=10$   & $k=50$   & $k=20$   & $k=100$ \\
        \midrule
        \multirow{2}{*}{With Projection}   & 15190.10  & 42138.45 & 29906.24 & 83474.98 \\
                                        & 0.4635s   & 0.5513s  & 0.6423s & 0.9940s  \\
        \cmidrule(lr){1-5}
        \multirow{2}{*}{No Projection} & 15192.64  & 42172.44 & 29926.33 & 83559.22 \\
                                        & 0.4261s   & 0.5378s   & 0.6472s & 0.9135s  \\
        \bottomrule
    \end{tabular}
\end{table}

\subsection{Ablation on Hard Instances for Greedy}
\label{app:greedy-trap-ablation}

\paragraph{Hard example for greedy.}
The greedy algorithm performs well, especially on random graphs, for the MC problem because the coverage function is monotone and submodular. To show our method works even when greedy fails, we construct the \textsc{GreedyTrap500} and \textsc{GreedyTrap1000} datasets for MC. In this construction, the greedy algorithm is pushed near its theoretical $1-1/e=0.63212$ lower bound and attracted by sets with large initial marginal coverage, falling significantly short of the optimal solution.

The construction is as follows. We first choose a coverable subset of elements from $\mathcal{U}$ and partition it into $k$ disjoint parts. For each part, we add one set that covers the entire part; selecting these $k$ sets covers all coverable elements. We then add $k$ greedy-trap sets. Each trap set covers slightly more than a $1/k$ fraction of every part, so these sets have large marginal gain at the beginning and are selected by greedy. The remaining sets are low-degree filler sets. For \textsc{GreedyTrap500}, we use 500 sets and 1000 elements; for \textsc{GreedyTrap1000}, we use 1000 sets and 2000 elements. In testing, we report the coverage ratio over all elements.

\begin{table}[htb]
\centering
\caption{Performance comparison on the \textsc{GreedyTrap} datasets. Values
represent coverage ratios in percentage averaged across 64 test instances.}
\label{tab:greedytrap}
\begin{tabular}{lcccc}
\toprule
 & \multicolumn{2}{c}{\textbf{GreedyTrap500}} &
   \multicolumn{2}{c}{\textbf{GreedyTrap1000}} \\
\cmidrule(lr){2-3} \cmidrule(lr){4-5}
\textbf{Method} & $k=10$ & $k=50$ & $k=20$ & $k=100$ \\
\midrule
Greedy & 46.30\% & 66.50\% & 45.80\% & 70.60\% \\
FWNCO(ours)   & 59.90\% & 78.40\% & 55.60\% & 82.30\% \\
OPT    & 60.10\% & 81.90\% & 58.60\% & 86.70\% \\
\bottomrule
\end{tabular}
\end{table}

As shown in Table~\ref{tab:greedytrap}, greedy falls significantly short of
OPT on these hard instances, whereas our method consistently
outperforms greedy and achieves coverage much closer to OPT. This shows that
our method can still learn useful problem structure and maintain strong
performance on data distributions where the standard greedy heuristic fails.
\subsection{Effect of the Frank--Wolfe Iteration Budget}
\label{app:mc-fw-budget}

We investigate the effect of the number of Frank--Wolfe
decomposition iterations $T$ on solution quality and runtime
for Maximum Coverage. Each iteration of Algorithm~\ref{alg:fw} performs
one LMO call and a closed-form line-search step. Note that the runtime is linear in $T$.

Table~\ref{tab:mc-fw-iterations} reports results for four
configurations, with solution quality normalized by the
weighted coverage obtained at $T=200$. Increasing $T$
consistently improves quality, with diminishing gains at
larger budgets. At $T=50$, the method already achieves
99.391--99.652\% of the reference quality across all four
configurations. Increasing the budget to $T=200$ improves
relative quality by only 0.348--0.609 percentage points,
while increasing the reported runtime by approximately
$3.7$--$3.8\times$. These results support our choice of
$T=50$ as a practical balance between solution quality
and computational cost.

\begin{table*}[h]
\centering
\caption{Maximum coverage quality and inference time under different FW
iteration budgets. Quality is normalized by the weighted coverage obtained at
$T=200$ and reported as a percentage. Time recorded on an NVIDIA GeForce RTX 4090 GPU and not directly comparable with other results.}
\label{tab:mc-fw-iterations}
\setlength{\tabcolsep}{3.5pt}
\resizebox{\textwidth}{!}{%
\begin{tabular}{rcccccccc}
\toprule
& \multicolumn{2}{c}{$n{=}500,k{=}10$}
& \multicolumn{2}{c}{$n{=}500,k{=}50$}
& \multicolumn{2}{c}{$n{=}1000,k{=}20$}
& \multicolumn{2}{c}{$n{=}1000,k{=}100$} \\
\cmidrule(lr){2-3}\cmidrule(lr){4-5}
\cmidrule(lr){6-7}\cmidrule(lr){8-9}
$T$
& Quality (\%) & Time (s)
& Quality (\%) & Time (s)
& Quality (\%) & Time (s)
& Quality (\%) & Time (s) \\
\midrule
10  & 98.197 & 0.0802
    & 99.230 & 0.1153
    & 98.712 & 0.1418
    & 99.464 & 0.1965 \\
30  & 98.942 & 0.2046
    & 99.376 & 0.2853
    & 99.296 & 0.3556
    & 99.548 & 0.4920 \\
\textbf{50}
    & \textbf{99.391} & \textbf{0.3295}
    & \textbf{99.494} & \textbf{0.4554}
    & \textbf{99.501} & \textbf{0.5669}
    & \textbf{99.652} & \textbf{0.7852} \\
70  & 99.470 & 0.4516
    & 99.630 & 0.6246
    & 99.601 & 0.7784
    & 99.755 & 1.0770 \\
100 & 99.659 & 0.6333
    & 99.771 & 0.8775
    & 99.695 & 1.0889
    & 99.861 & 1.5150 \\
150 & 99.859 & 0.9345
    & 99.909 & 1.2994
    & 99.848 & 1.5979
    & 99.930 & 2.2447 \\
\textbf{200}
    & \textbf{100.000} & \textbf{1.2365}
    & \textbf{100.000} & \textbf{1.7197}
    & \textbf{100.000} & \textbf{2.1068}
    & \textbf{100.000} & \textbf{2.9653} \\
\bottomrule
\end{tabular}%
}
\end{table*}

\FloatBarrier
\section{Details and Further Experiments on QAP}

In this section, we provide a detailed discussion of the numerical results for QAP. The full experimental results are reported in \cref{QAPLIB}.


\begin{table*}[t]
\centering
\caption{Complete experiment results on QAPLIB. The mean/max/min
gaps are reported for each category. The mean performance over all categories and the inference time (s)
per instance are reported.}
\resizebox{1\textwidth}{!}{
\begin{tabular}{l|ccc|ccc|ccc|ccc}
\hline & \multicolumn{3}{|c|}{ bur (26) } & \multicolumn{3}{|c|}{chr(12-25)} & \multicolumn{3}{|c|}{ esc (16-64)} & \multicolumn{3}{|c}{ had (12 - 20) }  \\
\cline{2-13} & Mean$\downarrow$ & Min$\downarrow$  &  Max$\downarrow$  & Mean$\downarrow$ & Min$\downarrow$  &  Max$\downarrow$ & Mean$\downarrow$ & Min$\downarrow$  &  Max$\downarrow$  & Mean$\downarrow$ & Min$\downarrow$  &  Max$\downarrow$ \\
\hline SM & 22.3 & 20.3 & 24.9 & 460.1 & 144.6 & 869.1 & 301.6 & \textbf{0.0} & 3300.0 & 17.4 & 14.7 & 21.5 \\
 RRWM & 23.1 & 19.3 & 27.3 & 616.0 & 120.5 & 1346.3 & 63.9 & \textbf{0.0} & 200.0 & 25.1 & 22.1 & 28.3 \\
 SK-JA & 4.7 & 2.8 & 6.2 & \textbf{38.5} & \textbf{0.0} &\textbf{186.1} & 364.8 & \textbf{0.0} & 2200.0 & 25.8 & 6.9 & 100.0  \\
 NGM & 3.4 & 2.8 & 4.4 & 121.3 & 45.4 & 251.9 & 126.7 & \textbf{0.0} & 200.0 & 8.2 & 6.0 & 11.6  \\
  RGM & 7.1 & 4.5 & 9.0 & 112.4 & 23.4 & 361.4 & 32.8 & \textbf{0.0} & 141.5 & 6.2 & 1.9 & 9.0 \\
 SAWT & 2.8 & 2.2 & 3.4 & 110.7 & 8.6 & 201.2 & \textbf{13.5} & \textbf{0.0} & \textbf{63.7} & 3.8 & 1.6 & 6.5 \\
 \hline
 \textbf{FWNCO (ours)} & \textbf{2.3} & \textbf{1.9} & \textbf{2.6} & 97.7 & 28.8 & 209.9 & 19.1 & \textbf{0.0} & 119.1 & \textbf{2.8} & \textbf{1.2} & \textbf{3.7} \\
\hline
\hline
&\multicolumn{3}{|c|}{ kra (30-32)} & \multicolumn{3}{|c|}{ lipa (20-60)} & \multicolumn{3}{|c|}{ nug (12-30)} & \multicolumn{3}{|c}{ rou (12-30)} \\
\cline{2-13}
& Mean$\downarrow$ & Min$\downarrow$  &  Max$\downarrow$  & Mean$\downarrow$ & Min$\downarrow$  &  Max$\downarrow$ & Mean$\downarrow$ & Min$\downarrow$  &  Max$\downarrow$  & Mean$\downarrow$ & Min$\downarrow$  &  Max$\downarrow$\\
\hline SM & 65.3 & 63.8 & 67.3 & 19.0 & 3.8 & 34.8 & 45.5 & 34.2 & 64.0 & 35.8 & 30.9 & 38.2 \\
RRWM  &58.8 & 53.9 & 67.7 & 20.9 & 3.6 & 41.2 & 67.8 & 52.6 & 79.6 & 51.2 & 39.3 & 60.1 \\
SK-JA & 41.4 & 38.9 & 44.4 & \textbf{0.0} & \textbf{0.0} & \textbf{0.0} & 25.3 & 10.9 & 100.0 & 13.7 & 10.3 & 17.4 \\
 NGM & 31.6 & 28.7 & 36.8 & 16.2 & 3.6 & 29.4 & 21.0 & 14.0 & 28.5 & 30.9 & 23.7 & 36.3 \\
  RGM & \textbf{15.0} & \textbf{10.4} & \textbf{20.6} & 13.3 & 3.0 & 23.8 & 9.7 & 6.1 & \textbf{12.9} & 13.4 & \textbf{7.1} & 16.7 \\
 SAWT & 30.1 & 28.1 & 34.2 & 0.4 & \textbf{0.0} & 2.6 & \textbf{9.2} & \textbf{4.1} & 13.1 & \textbf{10.8} & 8.2 & 13.1 \\
  \hline
 \textbf{FWNCO (ours)} & 37.0 & 34.8 & 39.2 & 13.5 & 2.2 & 26.3 & 11.5 & 7.3 & 14.7 & 10.9 & 9.7 & \textbf{12.1} \\
\hline
\hline
&\multicolumn{3}{|c|}{ scr (12-20)} & \multicolumn{3}{|c|}{ sko(42-64)} & \multicolumn{3}{|c|}{ ste(36)} & \multicolumn{3}{|c}{ tai(12-64)} \\
\cline{2-13}
& Mean$\downarrow$ & Min$\downarrow$  &  Max$\downarrow$  & Mean$\downarrow$ & Min$\downarrow$  &  Max$\downarrow$ & Mean$\downarrow$ & Min$\downarrow$  &  Max$\downarrow$  & Mean$\downarrow$ & Min$\downarrow$  &  Max$\downarrow$\\
\hline SM & 123.4 & 104.0 & 139.1 & 29.0 & 26.6 & 31.4 & 475.5 & 197.7 & 1013.6 & 180.5 & 21.6 & 1257.9 \\
RRWM  & 173.5 & 98.9 & 218.6 & 48.5 & 47.7 & 49.3 & 539.4 & 249.5 & 1117.8 & 197.2 & 26.8 & 1256.7 \\
SK-JA & 48.6 & 44.3 & 55.7 & 18.3 & 16.1 & 20.5 & 120.4 & 72.5 & 200.4 & 25.2 & 1.6 & 107.1\\
 NGM & 55.5 & 41.4 & 66.2 & 25.2 & 22.8 & 27.7 & 101.7 & 57.6 & 172.8 & 61.4 & 18.7 & 352.1 \\
  RGM & 45.5 & 30.2 & 56.1 & \textbf{10.6} & \textbf{9.9} & \textbf{11.2} & 134.1 & 69.9 & 237.0 & 17.3 & 11.4 & \textbf{28.6} \\
 SAWT & 28.5 & \textbf{10.2} & 48.0 & 17.7 & 16.7 & 18.7 & 93.5 & \textbf{46.7} & 170.2 & \textbf{16.5} & \textbf{0.5} & 48.7 \\
  \hline
 \textbf{FWNCO (ours)} & \textbf{23.6} & 13.6 & \textbf{39.8} & 15.0 & 13.8 & 15.6 & \textbf{83.5} & 58.5 & \textbf{130.2} & 20.7 & 1.7 & 43.0 \\
\hline
\hline
&\multicolumn{3}{|c|}{ tho (30-40)} & \multicolumn{3}{|c|}{ wil(50)} & \multicolumn{3}{|c|}{ \textbf{Average(12-64)}} & \multicolumn{3}{|c}{Time per instance} \\
\cline{2-10}
& Mean$\downarrow$ & Min$\downarrow$  &  Max$\downarrow$  & Mean$\downarrow$ & Min$\downarrow$  &  Max$\downarrow$ & Mean$\downarrow$ & Min$\downarrow$  &  Max$\downarrow$    &  \multicolumn{3}{|c}{(in seconds)} \\
\hline SM & 55.0 & 54.0 & 56.0 & 13.8 & 11.7 & 15.9 & 181.2 & 46.9 & 949.9 & & \textbf{0.01}& \\
RRWM & 80.6 & 78.2 & 83.0 & 18.2 & 12.5 & 23.8 & 169.5 & 49.5 & 432.9 & & 0.15& \\
SK-JA & 32.9 & 30.6 & 35.3 & 8.8 & \textbf{6.7} & 10.7 & 93.2 & \textbf{9.0} & 497.9 & & 563.4& \\
 NGM & 27.5 & 24.8 & 30.2 & 10.8 & 8.2 & 11.1 & 62.4 & 17.8 & 129.7 & & 15.72& \\
  RGM & \textbf{20.7} & \textbf{12.7} & 28.6 & 8.1 & 7.9 & \textbf{8.4} & 35.8 & 10.7 & 101.1 & & 75.53& \\
 SAWT & 24.8 & 23.2 & 26.4 & \textbf{8.1} & 7.6 & 8.6 & 26.8 & 11.2 & \textbf{47.0} && 12.11 & \\
  \hline
 \textbf{FWNCO (ours)} & 23.0 & 20.7 & \textbf{25.3} & 9.2 & 9.2 & 9.2 & \textbf{26.4} & 14.5 & 49.3 & & 1.56 & \\
\hline
\end{tabular}    
}
\label{QAPLIB}

\end{table*}

\subsection{Experiment Setup}
\label{sec:qap-setup}
\paragraph{Instance distribution and data splits.}
For synthetic training, we follow the data generation method of \cite{tan2024learning} and generate a synthetic training pool with 5120 instances, $n=32$ and $p=0.3$. During training, each gradient step samples a mini-batch uniformly from the training pool.

For test, we use the QAPLIB dataset to demostrate the generalization ability of our method. QAPLIB~\citep{burkard1997qaplib} is a widely used benchmark for QAP, collecting
real-world and synthetic instances from multiple sources.
Each instance is specified by a flow matrix and a distance matrix, and the naming convention uses a class prefix
(e.g., \texttt{bur}, \texttt{chr}, \texttt{tai}) together with the problem size.
The distributions and structural properties of the flow/distance matrices vary substantially across classes,
making QAPLIB a challenging testbed for cross-class generalization.

\paragraph{Input representation.}
Given $A$ and $B$, we build a node embedding for each matrix separately using
per-node statistics (diagonal entry and row/column aggregations) together with random-projection features computed with a fixed Gaussian
projection matrix $R$. Features are computed after per-instance magnitude normalization
(i.e., dividing by $\max_{i,j}|M_{ij}|$) to stabilize training across varying scales.

\paragraph{Model, optimizer, and training schedule.}
We use a two-layer GraphSAGE aggregator with a residual skip connection and hidden size
\texttt{model\_dim}=128, applied independently to $A$ and $B$.
The neural network outputs a matrix followed by a Sinkhorn projection layer.
We train with AdamW (learning rate $3\cdot 10^{-4}$, weight decay $10^{-4}$) and gradient
clipping at norm $1.0$. Training runs for 100 epochs; each epoch consists of 100 instances randomly sampled from the training pool. A single global RNG seed (0) is used for both training and inference.

\paragraph{Sinkhorn as a practical projection layer.}
Although our FW-based pipeline does \emph{not} theoretically require the network output to lie inside the Birkhoff polytope (the \textsc{FW Decomposition} can be applied to any real matrix), in QAP this relaxation is empirically fragile because QAP instances are substantially more heterogeneous than the other tasks: QAPLIB in particular exhibits highly dispersed scales, sparsity patterns, and instance-generation mechanisms across data classes, leading to large distribution gaps relative to synthetic pretraining.
In this regime, leaving the model output unconstrained often yields score matrices whose magnitude and row/column mass distribution are poorly calibrated, which in turn destabilizes the downstream greedy steps and makes the distribution returned by \textsc{FW Decomposition} less informative.

To mitigate this, we insert a Sinkhorn \cite{sinkhorn1964relationship} projection layer with 20 iterations as a soft projection that maps raw assignment logits to an approximately doubly-stochastic matrix. This operation enforces nonnegativity and approximately unit row/column sums, placing $\mathbf{X}$, the output of neural network, closer to the Birkhoff polytope. It serves as a lightweight output-range calibration mechanism that significantly improves numerical stability under the large inter-class distribution shift in QAPLIB.
We emphasize that this Sinkhorn step is used for practical robustness, not as a theoretical requirement of the \textsc{FW Decomposition} itself.

\paragraph{Decomposition settings.}
We run our \textsc{FW Decomposition} algorithm over permutation matrices with a iteration budget $T=2000$ in training and $T=3000$ in inference.

At each iteration, the LMO is instantiated by a \emph{greedy} maximum-weight assignment routine (no Hungarian), applied to the current residual matrix.
The resulting decomposition yields a sequence of permutations and their weights; in inference, we select the minimum-cost permutation under the QAP objective for reporting.

\begin{table}[ht!]
\centering
\caption{QAPLIB results for instances with $n>64$.}
\label{tab:qaplib_gt64}
\begin{tabular}{l r r r}
\toprule
Instance & pred\_best & best\_known & gap \\
\midrule
lipa70a  & 172973      & 169755      & 1.90\%   \\
lipa70b  & 5872840     & 4603200     & 27.58\%  \\
sko72    & 75952       & 66256       & 14.63\%  \\
lipa80a  & 257769      & 253195      & 1.81\%   \\
lipa80b  & 9983972     & 7763962     & 28.59\%  \\
tai80b   & 1057836224  & 818415043   & 29.25\%  \\
tai80a   & 15311664    & 13499184    & 13.43\%  \\
sko81    & 104214      & 90998       & 14.52\%  \\
lipa90a  & 366329      & 360630      & 1.58\%   \\
lipa90b  & 16017197    & 12490441    & 28.24\%  \\
sko90    & 132078      & 115534      & 14.32\%  \\
sko100a  & 174044      & 152002      & 14.50\%  \\
sko100b  & 174378      & 153890      & 13.31\%  \\
sko100c  & 170276      & 147862      & 15.16\%  \\
sko100d  & 170574      & 149576      & 14.04\%  \\
sko100e  & 170746      & 149150      & 14.48\%  \\
sko100f  & 169558      & 149036      & 13.77\%  \\
tai100a  & 23638054    & 21052466    & 12.28\%  \\
tai100b  & 1498196608  & 1185996137  & 26.32\%  \\
wil100   & 296136      & 273038      & 8.46\%   \\
esc128   & 244         & 64          & 281.25\% \\
tai150b  & 619556288   & 498896643   & 24.19\%  \\
tho150   & 9579958     & 8133398     & 17.79\%  \\
tai256c  & 50753096    & 44759294    & 13.39\%  \\
\midrule
\multicolumn{3}{r}{Average gap} & 26.87\% \\
\bottomrule
\end{tabular}
\end{table}

\paragraph{QAPLIB evaluation.}

Following previous approaches \cite{tan2024learning,liu2023revocable}, We report per-class gap statistics and the aggregate summary as in \cref{QAPLIB} for instances of size 12-64 since many baseline methods will run out of time on very large instances. However, our method is also capable of solving instances with larger sizes, as listed below in \cref{tab:qaplib_gt64}.

\subsection{Related Work (QAP / Matching).}
QAP and matching are among the most challenging CO problems for learning-based methods, largely due to severe
distribution shifts and outliers such as those in QAPLIB.
\textbf{NGM}~\cite{wang2021neural} was one of the earliest attempts to tackle QAP/graph matching with neural models. Subsequent work proposed \textbf{RGM}~\cite{liu2023revocable}, which introduced a \emph{learn-to-construct (L2C)} pipeline for QAP, building a permutation by making a sequence of constructive decisions, following that, \textbf{SAWT}~\cite{tan2024learning} introduced a \emph{learn-to-improve (L2I)} pipeline, which starts from an initial permutation and repeatedly applies local modifications to improve it.

Both L2C and L2I are inherently iterative: they start from an empty/partial solution or an initial solution
and improve it step by step.
This makes inference naturally less efficient and less friendly to large instances, and robustness to outliers is achieved through relatively complex neural architectures and nontrivial inference procedures.

Our general framework applies to QAP in a single-pass direct-output manner: we produce
assignment logits once and deterministically output a permutation without repeated optimization loops, yielding an
immediate speed advantage.
Moreover, with only a lightweight Sinkhorn normalization layer added to the base framework, we obtain outlier-friendly
behavior at very low additional cost.

\subsection{Baselines}
We compare against two classes of baselines that cover both classical matching solvers and representative neural methods on QAPLIB.

\textbf{(i) Classical solvers (non-learning).}
We include \textbf{SM}, \textbf{RRWM}, and \textbf{SK-JA} \cite{kushinsky2019sinkhorn} as standard non-learning baselines for graph matching / QAP.

\textbf{(ii) Neural methods.}
We evaluate \textbf{NGM}, \textbf{RGM}, and \textbf{SAWT} as representative learning-based baselines on QAPLIB.

We follow the same per-class reporting protocol as prior QAPLIB evaluations, and report the mean/min/max optimality
gaps for each QAPLIB class together with the average per-instance inference time.
As the public releases of both $\textbf{SAWT}$ and $\textbf{RGM}$ do not include a complete evaluation setup and code for QAPLIB, we are unable to reproduce their experiments locally. Instead, we directly cite their reported timing results. Consequently, due to hardware disparities, these runtime comparisons should be interpreted as indicative references rather than strictly controlled benchmarks. We also note an additional caveat in their QAPLIB results: for the WIL class they report multiple results, whereas within range ($n\in[12,64]$) where they reported, WIL  contains only a single instance (WIL50). Since we cannot verify the exact evaluation protocol (e.g., whether additional variants or sizes were included), we therefore quote their reported numbers in full without further reinterpretation.

\subsection{Effect of the Frank--Wolfe Iteration Budget}
\label{app:qap-fw-budget}

We evaluate the sensitivity of QAP solution quality and
runtime to the Frank--Wolfe decomposition budget ($T$) on QAPLIB.
Table~\ref{tab:qap-fw-quality-time} reports relative quality
normalized by the weighted solution obtained at $T=10000$,
together with runtime for each budget.

Solution quality improves substantially as the budget
increases from $T=50$ to $T=1000$, rising from 94.247\%
to 99.237\% of the reference quality. Further iterations
yield progressively smaller improvements. At our chosen
budget of $T=3000$, relative quality reaches 99.837\%,
with a runtime of 4.410 seconds. Increasing the budget
to $T=10000$ adds only 0.163 percentage points of relative
quality while increasing runtime to 15.798 seconds,
approximately $3.6\times$ higher. We therefore use
$T=3000$ to balance solution quality and runtime.

Compared with Maximum Coverage, QAP empirically benefits
from a substantially larger decomposition budget.
The approximation guarantee in Theorem~\ref{thm:FWextension} provides
qualitative context: the squared approximation error
is bounded by $4D^2/(T+1)$, where $D$ is the diameter
of the feasible polytope. Our choice of $T$ is guided
by this and the observed quality--runtime trade-off.

\begin{table}[h]
\centering
\caption{QAP solution quality under different FW iteration budgets on QAPLIB. Results are normalized
by the weighted solution obtained at T = 10000. Time recorded on an NVIDIA GeForce RTX 4090 GPU and not directly comparable with other results.}
\label{tab:qap-fw-quality-time}
\setlength{\tabcolsep}{8pt}
\begin{tabular}{rcc}
\toprule
FW budget $T$ & Relative quality (\%) & Time (s) \\
\midrule
   50 & 94.247 & 0.054 \\
  100 & 95.632 & 0.105 \\
  200 & 96.825 & 0.216 \\
  500 & 98.595 & 0.594 \\
 1000 & 99.237 & 1.304 \\
 2000 & 99.672 & 2.828 \\
 \textbf{3000} & \textbf{99.837} & \textbf{4.410} \\
 4000 & 99.893 & 6.017 \\
 5000 & 99.898 & 7.639 \\
 7500 & 99.921 & 11.707 \\
10000 & 100.000 & 15.798 \\
\bottomrule
\end{tabular}
\end{table}
\FloatBarrier

\section{Details and Further Experiments on TSP}

\subsection{Experiment Setup}
\label{sec:tsp-setup}
\paragraph{Instance distribution and data splits.}
We train and evaluate on the 2D Euclidean TSP, where each instance consists of $n$ points in $[0,1]^2$ and tour length is computed under standard Euclidean distances.
We use the training and evaluation data from \textbf{COExpander} \cite{Ma2025COExpanderAS}. During training, for a given $n$, at each epoch, 64 instances are sampled from the training pool, and different data are used across epochs.
The test split has 1280 instances for TSP-50 and TSP-100, 128 instances for TSP-500 and 32 instances for TSP-1000, so that most methods could complete the evaluation in reasonable time frame. 

\paragraph{Graph construction and input features.}
For each instance we construct the complete graph on $n$ nodes (undirected edges enumerated with $u<v$), and represent it in PyG using the standard bidirected encoding (each undirected edge appears in both directions in \texttt{edge\_index}).
Node features are the raw coordinates augmented with a Fourier positional encoding (with $F{=}4$ frequencies) passed through a small MLP before message passing.
Edge features consist of a single scalar channel: the LP relaxation value $x_{\text{LP}}(e)$ for the undirected edge, duplicated across both directed copies.

\paragraph{LP relaxation feature computation.}
We compute $x_{\text{LP}}$ by solving the subtour-elimination LP using the QSopt LP solver, which is also the LP solver used by Concorde.
Edge costs are obtained from Euclidean distances and scaled to integer weights using a fixed coordinate scale of $10^6$ for the QSopt interface.
LP solves are performed on CPU for every instance in both train and test splits.

\paragraph{Model, optimizer, and training schedule.}
The encoder is a GATv2 model with residual blocks: hidden size 128, 3 message-passing layers, and 4 attention heads per layer.
We train with AdamW  with a learning rate of $10^{-3}$ and a weight decay of $10^{-4}$ for 50 epochs, using a single global RNG seed (0) for reproducibility. The best model is saved and used for inference.

\paragraph{Decomposition hyperparameters used in training and inference.}
We use a fixed decomposition budget of $T=64$ iterations in both training and testing, and a max-entropy regularization with temperature \texttt{fw\_entropy\_tau}$=2.0$ is added to the decomposition.
All reported TSP test results use the same hyperparameters as training and the reported result is the best permutation from the decomposition.

\subsection{Related Work: Neural Methods for TSP}

\paragraph{Learning paradigms: LC vs.\ GP.}
Neural approaches for TSP are often categorized by the granularity at which they make decisions. According to \cite{Ma2025COExpanderAS}, the \emph{Learning-to-Construct (LC)} paradigm builds a tour sequentially, repeatedly selecting the next decision conditioned on the current partial solution.
A key strength of LC is that feasibility can be \emph{hard-enforced} during decoding (e.g., by imposed masks that rule out invalid moves), which provides a transparent and reliable path to valid tours.
However, this same sequential structure typically incurs high inference latency on large instances, since the model must execute many dependent steps.

In contrast, the \emph{Global Prediction (GP)} paradigm predicts a globally complete structure in one (or a few) forward passes, commonly as an edge-probability heatmap, and then applies a separate recovery procedure to obtain an integer-feasible tour.
GP can be substantially faster at inference and has become a dominant template for large-scale settings, including diffusion- and consistency-based solvers that generate global solutions through iterative denoising.
At the same time, GP shifts difficulty into the heatmap-to-solution stage: the final performance can depend heavily on the specific decoding/recovery algorithm (e.g., Parallel Sampling, Greedy, Monte Carlo Tree Search (MCTS)) and additional improvement heuristics (e.g., 2-opt), making the pipeline sensitive to post-processing choices and sometimes blurring what is being evaluated (the neural predictor vs.\ the downstream solver), since heatmap recovery is itself a combinatorial optimization subroutine.

\paragraph{Adaptive Expansion (AE) and the remaining heatmap bottleneck.}
\textbf{COExpander} introduces \emph{Adaptive Expansion (AE)} as an intermediate paradigm intended to combine the best of both worlds:
it uses a global predictor to produce a heatmap under partial-solution prompting, and then expands the determined set via a determination/decoding operator, effectively adapting the decision granularity across iterations.
While AE improves the efficiency--feasibility trade-off relative to pure LC or pure GP, it still inherits a core limitation of GP-style pipelines: it relies on predicting a heatmap and then recovering an integer tour via post-processing. So overall success remains coupled to the robustness and tuning of heatmap-based recovery.

\paragraph{Toward end-to-end pipelines with reduced problem-specific setup.}
 \citet{xia2024position}\ critically examine the prevalent \emph{heatmap-guided post-hoc search} paradigm for large-scale TSP (notably heatmap-guided MCTS), questioning the practical value of learning heatmaps that ultimately require substantial downstream search and advocating a shift toward more \emph{autonomous and generalizable} ML pipelines with less hand-crafted machinery and stronger guarantees.
Motivated by this perspective, our method aims to retain LC's feasibility discipline and GP's efficiency while avoiding the heatmap bottleneck:
we adopt an end-to-end pipeline that differentiates through the selected FW branch, with the \emph{optimization objective aligned with the prediction target}---we do not learn a surrogate heatmap and then rely on a separate recovery/search step, but instead directly optimize for improved feasible tours through a structured, deterministic procedure.
This design reduces reliance on hand-crafted post-processing, improves robustness across instance distributions, and enables stronger theoretical characterization of the resulting algorithmic guarantees.

\subsection{Baselines}
We compare against four classes of baselines that span both classical optimization and representative neural paradigms under different supervision regimes.

\textbf{(i) Exact solvers (non-learning).}
We report \textbf{Concorde} as an exact TSP solver, and the \textbf{Gurobi} optimizer as an exact baseline on small instances.
For larger instances, exact results are omitted when the solver fails to finish within the runtime cutoff.

\textbf{(ii) Classical heuristics (non-learning).}
We include \textbf{LKH3} \cite{helsgaun2017extension} as a strong hand-engineered heuristic, run for 500 trials.

\textbf{(iii) Neural LC baselines.}
We evaluate \textbf{RL4CO (SymNCO)} \cite{berto2025rl4co} as a representative \emph{Learning-to-Construct (LC)} method trained with reinforcement learning.
LC-style solvers typically decode solutions sequentially and rely on imposed masks (or equivalent constraints) to hard-guarantee feasibility during construction.

\textbf{(iv) Neural GP/AE baselines.}
We test heatmap-driven pipelines in both RL-style and supervised regimes.
\textbf{DIMES} \cite{qiu2022dimes} is included as a representative GP-style method trained via meta-RL.
On the supervised side, we include \textbf{DIFUSCO} \cite{sun2023difusco}, a diffusion-based GP approach, and \textbf{COExpander}, a heatmap-based \emph{Adaptive Expansion (AE)} method that achieves state-of-the-art performance.

Overall, under a fixed compute budget, we aim to test the landscape as comprehensively as possible across (a) paradigms (LC vs.\ GP vs.\ AE) and (b) supervision regimes (RL / meta-RL vs.\ supervised learning), while transparently reporting any omissions caused by solver timeouts or training-cost constraints.

\subsection{Detailed Results}
\begin{table*}[t]
\centering
\caption{Raw performance comparison on TSP instances. We report the objective value (Obj.) and inference time (s).}
\resizebox{\textwidth}{!}{
\begin{tabular}{l|c|cc|cc|cc|cc}
\hline
\multirow{2}{*}{Method} & \multirow{2}{*}{Type} & \multicolumn{2}{c|}{TSP-50} & \multicolumn{2}{c|}{TSP-100} & \multicolumn{2}{c|}{TSP-500} & \multicolumn{2}{c}{TSP-1000} \\
\cline{3-10}
 & & Obj. & Time & Obj. & Time & Obj. & Time & Obj. & Time \\
\hline
Concorde (Optimal) & Exact & 5.69 & 0.04 & 7.76 & 0.21 & 16.55 & 17.91 & 23.12 & 435.61 \\
Gurobi (Optimal)$^*$ & Exact & 5.69 & 0.27 & 7.76 & 1.14 & - & - & - & - \\
LKH3 (500) & Heuristics & 5.69 & 0.03 & 7.76 & 0.05 & 16.70 & 0.32 & 23.41 & 1.06 \\
\hline
DIMES (RL+S) & RL & 6.47 & 0.05 & 8.66 & 0.08 & 19.09 & 0.39 & 26.35 & 0.79 \\
RL4CO (Sym-NCO)$^\dagger$ & RL & 5.76 & 0.36 & 8.83 & 0.66 & - & - & - & - \\
DIFUSCO (S=1, I=50) & SL & 5.72 & 0.43 & 7.86 & 0.66 & 18.17 & 2.19 & 25.64 & 7.68 \\
COExpander (S=1,Ds=3,Is=5) & SL & 5.69 & 0.11 & 7.76 & 0.20 & 17.17 & 0.69 & 24.63 & 2.50 \\
\hline
\textbf{Ours-lp} & UL & 5.79 & 0.23 & 8.03 & 0.26 & 17.76 & 1.27 & 25.03 & 4.26 \\
\textbf{Ours-lp+learned matching} & UL & 5.77 & 0.52 & 7.98 & 0.74 & 17.70 & 2.77 & 24.84 & 9.16 \\
\hline
\end{tabular}
}
\begin{flushleft}
\footnotesize
$^*$ Results on larger instances are omitted since it was unable to finish within the cutoff time.\\
$^\dagger$ Results are omitted due to prohibitive computational costs during training on large datasets.
\end{flushleft}

\label{tab:tsp_results_full}
\end{table*}

\paragraph{Runtime breakdown.}
In \cref{tab:time_consumption}, we measure the average per-instance wall-clock cost of each stage in the TSP pipeline . The Subtour-Elimination LP and \textsc{FW Decomposition} dominates the runtime; in the learned matching version, the matching takes up a large portion of the overall runtime.

\begin{table}[!ht]
\centering
\begin{tabular}{lc}
\toprule
\textbf{Stage} & \textbf{Time (s/inst)} \\
\midrule
LP (Subtour-Elimination) & 3.09 \\
Neural Network       & 0.06 \\
FW over trees            & 0.93 \\
Greedy matching & 0.18 \\
Learned Matching & 4.23\\
\bottomrule
\end{tabular}
\caption{Average per-instance time on TSP1000.}
\label{tab:time_consumption}
\end{table}

\paragraph{Generalization Experiments (Across Instance Sizes).}
We further evaluate how well a model trained on one problem size transfers to other sizes.
Specifically, we train three separate models on TSP-100, TSP-500, and TSP-1000, respectively, and test each model on the same evaluation splits at sizes $\{100,500,1000\}$.
Table~\ref{tab:generalization} reports the resulting tour lengths.

Overall, we observe strong cross-size generalization: performance is largely stable when transferring between sizes, with only modest degradation when training on smaller instances and testing on larger ones.
For example, a model trained on TSP-100 remains competitive on TSP-500/1000, albeit with a small gap relative to models trained on the target size.
Conversely, models trained on larger instances transfer well to smaller instances, with only minor differences on TSP-100.
These results suggest that the learned scoring function captures size-agnostic geometric regularities of Euclidean TSP, enabling effective transfer across instance scales without retraining on each target size.
\begin{table}[H]
    \centering
    \caption{Generalization performance across different instance sizes.}
    \label{tab:generalization}
    \begin{tabular}{l|ccc}
        \toprule
        \multirow{2}{*}{Training Set} & \multicolumn{3}{c}{Testing Set} \\
        \cmidrule(lr){2-4}
                  & TSP-100 & TSP-500 & TSP-1000 \\
        \midrule
        TSP-100   & 8.03    & 17.85   & 25.28    \\
        TSP-500   & 8.09    & 17.76   & 25.06    \\
        TSP-1000  & 8.16    & 17.81   & 25.03    \\
        \bottomrule
    \end{tabular}
\end{table}

\subsection{Discussion on Learned Matching}
\label{app:learned-matching}

\paragraph{Motivation and positioning.}
In our TSP pipeline, each candidate spanning tree produced by the FW decomposition is rounded into a tour via the Christofides procedure.
A key step in Christofides is computing a perfect matching on the odd-degree vertex set induced by the tree.
Our default implementation uses a greedy matching routine for efficiency, while the learned-matching variant replaces this step with a trainable module that is \emph{optimized end-to-end for the TSP objective}, rather than supervised to recover an exact minimum-weight perfect matching.

\paragraph{Matching as another structured polytope component.}
For a given tree, let $O$ be its odd-degree vertex set (with $|O|$ even), and let $E_O$ denote all unordered pairs in $O$, which forms a structured polytope.
The matching network outputs a vector $S \in (0,1)^{|E_O|}$ over $E_O$.
We then run our \textsc{FW Decomposition} procedure on the matching polytope over $O$ to obtain a short convex decomposition
$A_{\text{match}} \approx \sum_{k} \beta_k \, {M_k}$, where each $M_k$ is an \emph{integral} perfect matching (an extreme point of the polytope),
$\beta_k \ge 0$, and $\sum_k \beta_k = 1$.

\paragraph{Joint objective (learning matchings for tours).}
Training couples the spanning-tree and matching components through the downstream tour length.
Concretely, the TSP model produces $X$ which is then decomposed through \textsc{FW Decomposition} over the spanning-tree polytope, and that yields a short list of trees with weights $\{\alpha_t\}$.
For each tree $t$, we form its odd set $O_t$, which then goes through the neural network which outputs $S_t$ on $E_{O_t}$, and we follow that by applying FW to obtain matchings $\{M_{t,k}\}$ with weights $\{\beta_{t,k}\}$.
We evaluate the Christofides tour length for each $(t,k)$ pair and minimize the mixture objective
\[
\mathcal{L}_{\text{tour}}
= \sum_{t} \alpha_t \sum_{k} \beta_{t,k}\; \!\left(\textsc{Christofides}(T_t, M_{t,k})\right),
\]
where $ \textsc{Christofides}(T_t, M_{t,k}) $ represents the length of the TSP tour generated by the given parameters. So the learned matchings are directly trained to be \emph{useful for producing short tours} under our pipeline.
We also include reconstruction loss in the same way as with other problems, in order to keep X close or inside the polytope. This training also does not require ground-truth matchings; gradients are driven by the downstream TSP objective through the Christofides rounding.

\paragraph{Matching network architecture.}
The learned matching utilizes a lightweight 2 layer MLP structure to perform fast inferences, it encodes nodes using Fourier positional encoding of the 2D coordinates as input features. The matching \textsc{FW Extension} uses a small iteration budget $T_{\text{match}}=8$. The TSP neural network working on the spanning tree polytope follows the same setup as \cref{sec:tsp-setup}.

Empirically, the learned matching variant improves objective values over the greedy-matching version (\cref{tab:tsp_results_full}) at the cost of higher inference time,
and the runtime breakdown (\cref{tab:time_consumption}) shows that matching can dominate the additional overhead on large instances.

\subsection{Ablation Study and Discussion on Projection}

Following the experiments for Maximum Coverage, we ablate the effect of projecting the output of the neural network back into the feasible polytope. Table~\ref{tab:ablation_proj} shows that, for TSP, removing projection yields slightly better performance on both TSP-500 and TSP-1000.

We attribute this to a structural property of (2D) Euclidean TSP.
Unlike problems with highly heterogeneous instance (e.g., QAP), Euclidean TSP is constrained by a low-dimensional geometric embedding and metric distances, which makes extremely ``hard'' out-of-distribution instances comparatively rare—both for synthetic generators and for real-world benchmarks that remain within the same problem family.
To support this claim empirically, we further test the model trained on TSP-1000 on 2D Euclidean instances from \textbf{TSPLIB}~\cite{reinelt1991tsplib} with 51 to 1002 nodes.
As shown in \cref{tab:tsplib_tsp1000_model}, our method remains robust even on this substantially broader collection.

These results suggest that, for problems like Euclidean TSP, explicit projection is not only unnecessary but can be counterproductive, while the no-projection design is fully viable and can even yield better performance in practice.

\begin{table}[!ht]
    \centering
    \caption{Ablation results with and without projection}
    \label{tab:ablation_proj}
    \begin{tabular}{lcc}
        \toprule
        Dataset & With Projection & No Projection \\
        \midrule
        TSP-500  & 18.06 & 17.76 \\
        TSP-1000 & 25.47 & 25.03 \\
        \bottomrule
    \end{tabular}
\end{table}

\begin{table}[H]
\centering
\caption{Performance on TSPLIB of the model trained on TSP-1000.}
\label{tab:tsplib_tsp1000_model}
\begin{tabular}{l l r r r r}
\toprule
Train Set & Test Set & Avg. Pred & Avg. Opt & Gap & Avg. Time (s) \\
\midrule
TSP-1000 & TSPLIB & 8.848170 & 8.061844 & 9.75\% & 1.703 \\
\bottomrule
\end{tabular}
\end{table}

\subsection{Ablation Study: LP + Noise }
\label{sec:lp+noise}
To test whether the gains come from \emph{learned} structure rather than arbitrary perturbations of the LP relaxation, we construct a randomized baseline by adding i.i.d.\ Gaussian noise to the subtour-LP solution: $\tilde{x}=x_{\mathrm{LP}}+\epsilon$ with $\epsilon\sim\mathcal{N}(0,\sigma^2)$ and $\sigma=0.2$.
We then run the \emph{same} FW decomposition and downstream rounding pipeline using $\tilde{x}$ as the input weights.
For each instance we repeat this procedure for 10 independent noise draws and report the \emph{best} (shortest) tour length, which is a favorable ``best-of-10'' setting for the noise baseline.
As shown in Table~\ref{tab:ablation_lp_noise}, this randomized perturbation is unable to match our method, with the gap becoming especially pronounced on larger instances (TSP500--TSP1000).
This indicates that our network learns instance-dependent signals that improve the pipeline beyond what can be obtained by injecting random noise into $x_{\mathrm{LP}}$.

\begin{table}[!ht]
\centering
\small
\begin{tabular}{lcc}
\toprule
 & \textbf{LP + Gaussian noise+FW} ($\sigma{=}0.2$, best of 10) & \textbf{Ours} \\
\midrule
TSP50   & 5.93  & 5.79 \\
TSP100  & 8.86  & 8.03 \\
TSP500  & 32.83 & 17.76 \\
TSP1000 & 65.51 & 25.03 \\
\bottomrule
\end{tabular}
\caption{Ablation results on replacing the learned edge weights with random perturbations of the LP solution. Lower is better.}
\label{tab:ablation_lp_noise}
\end{table}

\FloatBarrier
\section{Further Discussion on Inference Settings}
\label{sec:inference_settings}

In the field of Neural Combinatorial Optimization (NCO), strategies to refine solution quality \emph{during inference} are widely adopted. While terminology varies across domains—often referred to as Test-Time Optimization (TTO) or Local Search/Sampling—these procedures share a common goal: to improve the solution beyond a single neural forward pass. Broadly, they fall into three categories:
\begin{enumerate}
    \item \textbf{Gradient-based Active Search:} Fine-tuning model parameters on individual test instances (e.g., active search).
    \item \textbf{Search-based Refinement:} Applying discrete search algorithms, such as beam search, MCTS, or heuristic local search (e.g., 2-opt), to the model's output.
    \item \textbf{Parallel Sampling:} Running multiple inference trajectories, such as multiple diffusion noise seeds, batched augmentations, or independent neural forward passes, to enlarge the candidate pool and select the best solution. We distinguish this from native candidate-set decoding, where a single neural forward pass deterministically or procedurally produces several candidates as part of the method's prescribed decoder.
\end{enumerate}
\paragraph{Decoupling Learned Priors from Search Strategies.}
While these inference-time techniques significantly enhance performance, they introduce a critical challenge in evaluation: \textbf{they often obscure the intrinsic quality of the learned neural model}. 

Recent studies have critically examined this phenomenon. \cite{bother2022s} demonstrated that in tree-search-based neural methods, the search algorithm frequently performs the majority of the "heavy lifting," rendering the contribution of the learned neural guidance marginal or, in some cases, redundant. \cite{xia2024position} finds that this is also true for tree-search based heatmap recovery. They further point out that this inconsistency is particularly concerning because the search algorithm, which is critical for determining the final solution, is not integrated during neural network training. This confounding effect where the solver relies more on inference time compute than on the learned structural prior has also been noted in recent works such as \textbf{UCOM2} \cite{bu2024tackling} and \textbf{GeoNCO} \cite{karalias2025geometric}, which sets a clear difference between TTO and non-TTO methods and make no comparison between them.

\paragraph{Evaluation Protocol: The "One-Shot" Setting.}
In this work, our primary goal is to evaluate the intrinsic constructive capability of the proposed framework. We aim to assess how well the model aligns with the problem structure in a minimal-latency setting, without relying on extensive post-hoc search. Consequently, we strictly standardize the evaluation to a \textbf{non-TTO} context:
\begin{itemize}
    \item For Maximum Coverage baselines (e.g., \textbf{CardNN}, \textbf{UCOM2}, \textbf{GeoNCO}), we utilize their "short" inference modes which comes without gradient-based finetune on inference or iterative refinement steps.
    \item For TSP baselines (e.g., \textbf{DIMES}, \textbf{DIFUSCO}, \textbf{COExpander}), which often rely on active search, parallelized diffusion steps or local search to refine results, we restrict them to their greedy or native candidate-set decoding equivalents. We exclude parallel sampling strategies
    that enlarge the candidate pool through repeated inference, such as running diffusion-based solvers with multiple noise seeds. However, when a method's native decoder generates multiple feasible candidates from one neural network pass and selects the best one, we still regard it as one-shot. For example, \textbf{DIMES (RL+S)} uses a number of samples generated from a single neural network pass, so this is treated as native candidate-set decoding rather than parallel sampling. Our FW decomposition follows the same principle: one neural output is decomposed into a sparse set of feasible vertices, and inference reports the best vertex among them.
\end{itemize}
By isolating the raw performance of the underlying neural architectures, we provide a clearer benchmark of their structural learning capabilities, ensuring that improvements are attributed to better representation learning rather than increased inference computation. This protocol also keeps
the comparison consistent with the decoding budgets used by strong baselines, where candidate-set
selection is often part of the native decoder, while excluding additional inference-time computation
whose main effect is to enlarge the search pool.
\FloatBarrier
\section{GPU Memory Usage}
\label{app:gpu_memory}

We report the GPU memory footprint of FWNCO across all training configurations. Table~\ref{tab:gpu_memory} reports peak CUDA allocated memory during training. These measurements describe allocated GPU memory,
rather than total device-memory consumption, which can
additionally include reserved but unused memory and CUDA
runtime overhead.

\begin{table}[h]
    \centering
    \caption{Peak CUDA allocated memory during FWNCO training.}
    \label{tab:gpu_memory}
    \begin{tabular}{llr}
        \toprule
        Problem & Dataset / $k$ & Peak memory (GiB) \\
        \midrule
        MC  & Random500, $k=10$   & 0.135 \\
        MC  & Random500, $k=50$   & 0.135 \\
        MC  & Random1000, $k=20$  & 0.194 \\
        MC  & Random1000, $k=100$ & 0.194 \\
        \midrule
        TSP & TSP50               & 0.043 \\
        TSP & TSP100              & 0.106 \\
        TSP & TSP500              & 2.012 \\
        TSP & TSP1000             & 7.708 \\
        \midrule
        QAP & QAP32               & 5.168 \\
        \bottomrule
    \end{tabular}
\end{table}

Overall, the framework has a relatively modest memory footprint of below 8\,GiB for every configuration. These results indicate that FWNCO does not require the latest high-end GPUs for the evaluated configurations and suggest potential for scaling to substantially larger instances.
\FloatBarrier
\section{Polytopes with Efficient (Approximate) LMO}
\label{sec:cases-apendix}

\paragraph{Matroid polytope} Matroid polytopes provide a unified geometric framework for a broad class of combinatorial constraints. Given a matroid $\mc{M} = (V,\mc{I})$ with rank function $r(\cdot)$, Edmonds’ classical formulation characterizes the associated matroid base polytope as
\[
\mc{P}_{\text{base}} = \left\{ \bx \in \R^n_{\ge 0} \;\middle|\; \sum_{i \in A} x_i \le r(A)\ \forall A \subseteq V,\ \sum_{i \in V} x_i = r(V) \right\}.
\]
A key property of this polytope is that it admits an efficient LMO: optimizing a linear function over $\mc{P}_{\text{base}}$ reduces to finding a maximum-weight base of the matroid, which can be solved in polynomial time via the greedy algorithm. This class of polytopes subsumes several fundamental constraints as special cases:
\begin{itemize}
    \item \textbf{Cardinality constraints (uniform matroid).}  
    The ground set is $V=\{1,\dots,n\}$ with rank function $r(A)=\min\{|A|,k\}$. The corresponding base polytope is the hypersimplex
    $
    \Delta_{n,k} = \left\{ \bx \in \R^n_{\ge 0} \;\middle|\; \sum_{i=1}^n x_i = k,\; 0 \le x_i \le 1 \right\}.
    $

    \item \textbf{Partition matroids.}  
    Let $V = \bigcup_{j=1}^c V_j$ be a partition of the ground set with associated budgets $k_1,\dots,k_c$. The rank function is $r(A)=\sum_{j=1}^c \min\{|A \cap V_j|,k_j\}$, and the base polytope is given by
    $
    \left\{ \bx \in \R^n_{\ge 0} \;\middle|\; \sum_{i \in V_j} x_i = k_j \ \forall j \right\}.
    $

    \item \textbf{Spanning tree constraints (graphic matroid).}  
    Given an undirected graph $G=(V,E)$, the ground set consists of edges and the rank function is $r(A)=|V|-\kappa(A)$, where $\kappa(A)$ denotes the number of connected components in $(V,A)$. The corresponding base polytope is
    $
    \left\{ \bx \in \R^{|E|}_{\ge 0} \;\middle|\; \sum_{e \in E} x_e = |V|-1,\; \sum_{e \in A} x_e \le |V|-\kappa(A)\ \forall A \subseteq E \right\}.
    $
\end{itemize}


Our setup naturally covers all of these cases within a single algorithmic framework. In contrast, \citet{karalias2025geometric} study these settings in a case-specific manner, requiring the design of separate projection operators onto each polytope as well as bespoke decomposition procedures tailored to each constraint. By relying solely on the availability of an efficient LMO, our results apply uniformly across all these matroidal constraints without modifying the algorithm.

\paragraph{Birkhoff polytope}
\label{sec:poly-assignment}
An $n \times n$ matrix is called \emph{doubly stochastic} if all of its entries are nonnegative and the sum of the entries in each row and each column equals one. \emph{Permutation matrices} form a special subclass of doubly stochastic matrices, characterized by having exactly one entry equal to one in each row and column and zeros elsewhere. The set of all $n \times n$ doubly stochastic matrices constitutes a convex polytope known as the \emph{Birkhoff polytope} $\mathcal{B}_n$. The Birkhoff polytope lies in an $(n - 1)^2$-dimensional affine subspace of $n^2$-dimensional Euclidean space defined by $2n - 1$ independent linear constraints specifying that the row and column sums all equal 1. The extreme points of Birkhoff polytope $\mathcal{B}_n$ are exactly the permutation matrices. Given a residual matrix $R \in \mathbb{R}^{n\times n}$, the LMO over $\mathcal{B}_n$ solves
\[
P^\star
\;\in\;
\argmax_{P \in \ext(\mathcal{B}_n)} \langle R, P \rangle=\sum_i\sum_j R(i,j)P(i,j)
\]
This problem is equivalent to finding a maximum-weight perfect matching in a complete bipartite graph $G=(U,V,E)$, where $U$ and $V$ index the rows and columns of $R$, respectively, and each edge $(i,j)\in E$ has weight $R(i,j)$. Hence the linear maximization problem over Birkhoff polytope can be solved using the Hungarian algorithm in time $O(n^3)$. Hence, our approach subsumes the setting considered in \cite{nerem2025differentiable} without requiring any projection onto the Birkhoff polytope. In contrast, the authors of \cite{nerem2025differentiable} develop a specialized decomposition algorithm and introduce an additional loss term to penalize neural network outputs that lie outside the polytope.

\paragraph{DMO: greedy bipartite matching.}
Note that a simple greedy algorithm produces a 0.5-approximation to
maximum weight matching for any (positively) weighted graph. This still outputs a rather good matching while being significantly faster than running an exact Hungarian algorithm oracle at every iteration. Empirically, we observe that the greedy DMO does not sacrifice quality.

\paragraph{Matching polytope} Building on the discussion of the Birkhoff polytope, our approach also applies to matching and perfect matching polytopes.

\FloatBarrier


\end{document}